%% file: arxiv_v0.tex
\documentclass[11pt,letterpaper]{article}
\usepackage[margin=1in]{geometry}
\usepackage[T1]{fontenc}
\usepackage{newtxtext}
\usepackage{helvet}
\usepackage{courier}
\usepackage[hyphens]{url}
\usepackage{graphicx}
\usepackage{natbib}
\usepackage{caption}
\usepackage{algorithm}
\usepackage{algorithmic}

\usepackage{newfloat}
\usepackage{listings}
\DeclareCaptionStyle{ruled}{labelfont=normalfont,labelsep=colon,strut=off}
\floatstyle{ruled}
\newfloat{listing}{tb}{lst}{}
\floatname{listing}{Listing}

\usepackage{booktabs}

\usepackage{amsmath,amsfonts}
\usepackage{amsthm}
\usepackage[inline]{enumitem}

\renewcommand\cite{\citep}

\DeclareMathOperator*{\argmin}{arg\,min}
\DeclareMathOperator{\diag}{diag}
\DeclareMathOperator{\KL}{KL}
\DeclareMathOperator{\LSE}{LSE}
\DeclareMathOperator{\support}{supp}

\newtheorem{proposition}{Proposition}
\newtheorem{definition}{Definition}

\newtheorem{remark}{Remark}

\usepackage[table]{xcolor}
\definecolor{LightGray}{gray}{0.95}

\usepackage[hidelinks]{hyperref}

\title{SinkSLOT: Sinkhorn via Sparse Lifted Optimal Transport}
\author{
  Ian Hsieh$^{1,*}$, Soumya Snigdha Kundu$^{2}$, Tom Vercauteren$^{2,\dagger}$, Reuben Dorent$^{1,\dagger}$\\[6pt]
  \parbox{0.85\textwidth}{\centering\small
  $^{1}$Sorbonne Universit\'e, Institut du Cerveau - Paris Brain Institute - ICM, CNRS, Inria, Inserm, AP-HP, H\^opital de la Piti\'e Salp\^etri\`ere, 75013 Paris, France\\
  $^{2}$King's College London, UK\\[4pt]
  $^{*}$Corresponding author: \texttt{ian-yee-yang.hsieh@inria.fr}.\\
  $^{\dagger}$Equal contribution.}
}
\date{}

\begin{document}

\maketitle

\begin{abstract}
Entropic optimal transport (EOT) has been shown to offer a computationally tractable approximation to exact optimal transport. However, the standard Sinkhorn-Knopp algorithm has two main limitations. First, given discrete measures with $N$ points, each iteration requires $O(N^2)$ operations, which restricts its use on large-scale datasets (e.g. $N\geq10^4$). Second, it uses the independent coupling as a reference measure for regularisation. This assigns mass to high-cost transport edges at moderate regularisation strengths. We propose SinkSLOT, which addresses both limitations by putting forth the expected sliced lifted transport plan as a natural way to sparsify the Gibbs kernel with a non-independent prior coupling. We prove that: 1) SinkSLOT converges; 2) with $L$ slices, each resulting sparse Sinkhorn iteration costs $O(LN)$; and 3) the resulting objective is a divergence requiring no debiasing. Experiments on synthetic benchmarks show that SinkSLOT delivers substantial speedups over state-of-the-art dense and sparse EOT methods. We also demonstrate the applicability of the proposed divergence in a gradient flow experiment. The code is publicly available at \textcolor{blue}{\url{https://github.com/cai4cai/SinkSLOT}}.
\end{abstract}

\begin{figure*}[t]
\centering

\includegraphics[
width=1\textwidth,]{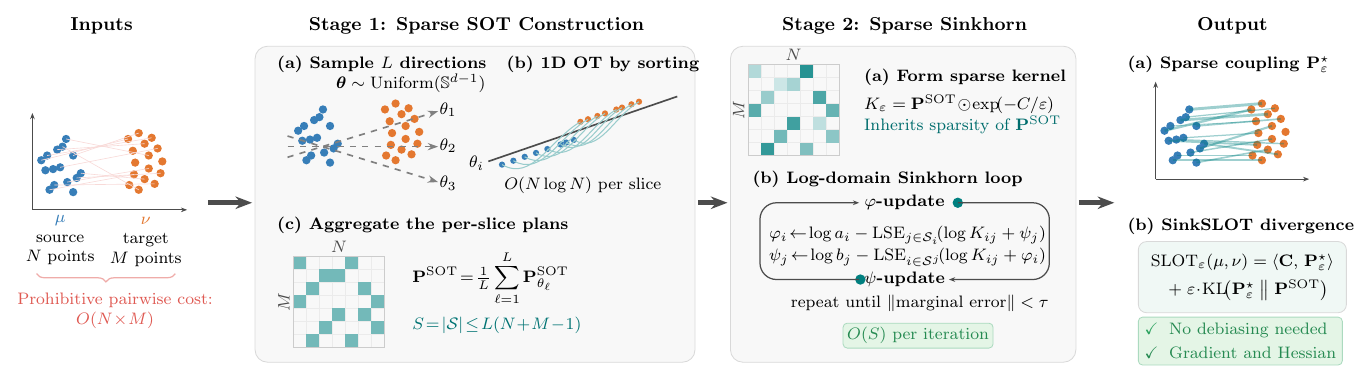}
  \caption{%
    \textbf{Overview of the SinkSLOT pipeline.}
    \textbf{Inputs:} two discrete measures $\mu$ and $\nu$;
    the na\"ive dense coupling has $N\times M$ nonzeros.
    \textbf{Stage~1:} $L$ random directions are sampled and
    1D OT problems are solved by sorting in $O(N\log N)$ per slice;
    the resulting plans are aggregated into a sparse lifted plan
    $\mathbf{P}^{\mathrm{SOT}}$ with at most $L(N+M-1)$ nonzeros.
    \textbf{Stage~2:} a sparse Gibbs kernel
    $K_\varepsilon = \mathbf{P}^{\mathrm{SOT}} \odot \exp(-C/\varepsilon)$
    inherits this sparsity, reducing each log-domain Sinkhorn
    iteration from $O(NM)$ to $O(L(N+M))$.
    \textbf{Output:} the sparse coupling $\mathbf{P}^\star_\varepsilon$
    and the SinkSLOT divergence $\mathrm{SLOT}_\varepsilon$,
    which requires no debiasing.
    }
\label{fig:overview}
\end{figure*}

\section{Introduction}
Optimal transport (OT) and the associated Wasserstein distance (WD) have become fundamental tools in machine learning and beyond~\cite{peyre2019computational,villani2009optimal}.
However, computing exact OT requires solving a linear programme with $O(N^3 \log N)$ complexity for discrete measures supported on $N$ points, limiting its applicability to large-scale problems.
This computational cost is particularly acute when WD is used as a loss function within an optimisation pipeline, such as training a generative neural network, where its gradient must be computed at every optimisation step.

Entropic OT (EOT)~\citep{cuturi2013sinkhorn} offers a computationally tractable approximation for the OT plan by adding entropy regularisation. The resulting strictly convex objective admits a unique regularised transport plan, computable via the Sinkhorn-Knopp algorithm~\citep{sinkhorn1967concerning}. For two $N$-point measures, standard dense Sinkhorn iterations require $O(N^2)$ time.
The resulting runtime is thus substantially lower than 
exact OT but still computationally demanding for large datasets.

Sliced optimal transport~(SOT)~\cite{bonneel2015sliced} provides a computationally attractive alternative to EOT, reducing runtime to $O(N \log N)$ per projection through 1D sorting.
However, it can underperform exact OT as a distance measure~\citep{rowland2019orthogonal}.
The lack of a transport plan in the original space further limits its utility when an explicit coupling is needed.
Recently, slice-based approaches that recover a transport plan by
lifting the 1D matchings back to the original space have been proposed~\cite{liu2025expected}.

Recently, \citet{nguyen2026sliced}  merges these two frameworks by replacing the reference coupling of EOT with the lifted SOT plan, introducing sliced-regularised optimal transport (SROT). SROT provides more accurate approximations of the true OT plan than EOT under the same level of regularisation, and significantly outperforms SOT. Both SROT and EOT are solved via Sinkhorn-type algorithms,
providing efficient approximations of the OT plan at reduced cost compared to exact solvers. 
However, the computational complexity of SROT remains $O(N^2)$ per Sinkhorn iteration due to the dense kernel,
substantially higher than the $O(N \log N)$ complexity of SOT.
The primary objective of \citet{nguyen2026sliced} is thus to improve coupling
accuracy over EOT, rather than to retain the computational efficiency of SOT.

In this work, we build upon \citet{nguyen2026sliced} and show that the sparsity of the SOT plan can be leveraged to formulate an efficient sparse Sinkhorn-type algorithm.
Rather than operating on the full $N \times N$ coupling matrix, the Sinkhorn iterations are restricted to the sparse support defined by the SOT plan, reducing per-iteration complexity from $O(N^2)$ to $O(LN)$, where $L$ is the number of projection slices.
As a result, the per-iteration cost remains substantially below EOT, making the method significantly more scalable than dense Sinkhorn while retaining the improved coupling accuracy of SROT.
Our contributions are threefold.
\begin{enumerate*}
    \item We formalize SinkSLOT, the sparse version of \citet{nguyen2026sliced}, and prove key properties, including convergence and uniqueness of the solution.
    \item We show that the SinkSLOT objective is a divergence without requiring EOT debiasing terms and admits an analytical gradient, making it suitable for optimisation and learning tasks.
    \item Experiments on synthetic benchmarks demonstrate that SinkSLOT achieves substantial speedups over state-of-the-art dense and sparse EOT methods.
\end{enumerate*}

\section{Background and related work}

In this section, we cover the necessary background on discrete OT for this paper. Readers are referred to \citet{peyre2019computational} for a comprehensive review, and to \citet{peyre2025optimal} for a treatment more geared toward machine learning. \citet{nguyen2025introduction} serves a similar purpose for the sliced problem.

\paragraph{Optimal Transport (OT)}
Given two discrete probability measures
$\mu = \sum_{i=1}^{N} a_i\,\delta_{x_i}$ and
$\nu = \sum_{j=1}^{M} b_j\,\delta_{y_j}$,
with positive unitary weight vectors $\mathbf{a}\in\Delta^N$, $\mathbf{b}\in\Delta^M$ 
and support points $x_i,y_j\in\mathbb{R}^d$,
and given a ground cost matrix $C\in\mathbb{R}_+^{N\times M}$ with entries
$C_{ij}=c(x_i,y_j)$, typically $C_{ij}=\|x_i-y_j\|^p$, $p\ge1$,
the OT problem~\citep{villani2009optimal,peyre2019computational} seeks a cost-minimising coupling:
\begin{equation}\label{eq:ot}
  P^{\mathrm{OT}}
  \;\in\;
  \argmin_{P\in\Pi(\mathbf{a},\mathbf{b})}\;
  \langle C,P\rangle,
\end{equation}
where
$\Pi(\mathbf{a},\mathbf{b})
=\bigl\{P\in\mathbb{R}_+^{N\times M}\mid
P\mathbf{1}_M=\mathbf{a},\;P^\top\mathbf{1}_N=\mathbf{b}\bigr\}$
is the set of admissible couplings (joint measures) with marginals $\mathbf{a}$ and $\mathbf{b}$.
When $C_{ij}=\|x_i-y_j\|^p$ with $p\ge1$, the optimal coupling $P^{\mathrm{OT}}$ defines the $p$-Wasserstein distance
$W_p(\mu,\nu)=\bigl(\langle C,P^{\mathrm{OT}}\rangle\bigr)^{1/p}$.
Problem \eqref{eq:ot} is a linear programme whose best known general-purpose solvers require $O(NM(N+M)\log(N+M))$ arithmetic operations~\citep{peyre2019computational},
limiting its use as a loss function within large-scale optimisation pipelines.

\paragraph{Entropic Optimal Transport (EOT)}
To alleviate this cost, \citet{cuturi2013sinkhorn} popularised
an entropy-regularised relaxation of~\eqref{eq:ot}:
\begin{equation}\label{eq:eot}
  P_{\varepsilon}^{\mathrm{EOT}}
  \;=\;
  \argmin_{P\in\Pi(\mathbf{a},\mathbf{b})}\;
  \langle C,P\rangle
  \;+\;
  \varepsilon\,\KL\bigl(P\,\|\,\mathbf{a}\otimes\mathbf{b}\bigr),
\end{equation}
where $\varepsilon>0$ is a regularisation strength,
$\mathbf{a}\otimes\mathbf{b}$ denotes the independent 
coupling
with entries $(\mathbf{a}\otimes\mathbf{b})_{ij}=a_i b_j$,
and
$\KL(P\|Q)=\sum_{ij}P_{ij}\log\frac{P_{ij}}{Q_{ij}}
 -P_{ij}+Q_{ij}$
is the generalised Kullback-Leibler divergence.
The unique solution has the form
$P_\varepsilon^{\mathrm{EOT}}=\diag(\mathbf{u})\,K\,\diag(\mathbf{v})$,
where $K_{ij}=a_i b_j\exp(-C_{ij}/\varepsilon)$ is the Gibbs kernel
and $(\mathbf{u},\mathbf{v})$ are positive scaling vectors found by
the Sinkhorn-Knopp algorithm:
\begin{equation}\label{eq:sinkhorn_eot}
  \mathbf{u}^{\ell+1}
  =\mathbf{a}\oslash(K\mathbf{v}^\ell),
  \qquad
  \mathbf{v}^{\ell+1}
  =\mathbf{b}\oslash(K^\top\mathbf{u}^{\ell+1}),
\end{equation}
where $\oslash$ is elementwise division.
The solution defines a discrepancy functional:
\begin{equation}\label{eq:eot_val}
  \mathrm{OT}_\varepsilon(\mu,\nu)
  =
  \langle C,P_\varepsilon^{\mathrm{EOT}}\rangle
  +\varepsilon\,\KL(P_\varepsilon^{\mathrm{EOT}}\|\mathbf{a}\otimes\mathbf{b}).
\end{equation}

Despite its widespread adoption, EOT has three well-known limitations.
First, although, for structured $C$ matrices, streaming backends help avoid the $O(NM)$ memory to store~$K$ at the cost of extra online computations \citep{feydy2019interpolating, ye2026flashsinkhorn}, each Sinkhorn iteration requires two dense matrix-vector products, resulting in $O(NM)$ cost per iteration. This quadratic scaling is prohibitive for
very large point clouds (e.g.\ $N,M\ge10^4$). 
Second, regularising toward the cost-agnostic independent coupling
$\mathbf{a}\otimes\mathbf{b}$ biases the plan toward dense and diffuse
matchings that potentially assign non-negligible mass to higher-cost transport
edges,
particularly for moderate values of~$\varepsilon$.
Finally, the EOT cost is biased in the sense that $\mathrm{OT}_\varepsilon(\mu,\mu)\neq0$
in general, so that using it directly as a loss requires an additional debiasing step to obtain the Sinkhorn divergence~\citep{feydy2019interpolating}.

\paragraph{Sparsification techniques for OT}Several methods reduce the complexity of OT by imposing sparsity or low-dimensional structure on the transport plan. We refer to~\citet{khamis2024scalable,ouyang2026sparsification} for broad surveys. \citet{blondel2018smooth} induce sparse transport plans through squared-$L_2$ and group-lasso regularisation, while \citet{liu2023sparsity} constrain each column to contain at most $k$ nonzeros. Low-rank alternatives include factored couplings \citep{forrow2019statistical,scetbon2021low,scetbon2022low} and low-rank-plus-sparse decompositions \citep{liu2021approximating}. These methods require specialised optimisation procedures rather than standard Sinkhorn scaling on a fixed kernel.

More closely related to our work, sparse-kernel methods reduce the $O(NM)$ cost of Sinkhorn iterations. \citet{schmitzer2019stabilized} adaptively truncate the kernel using the current dual potentials, \citet{gasteiger2021scalable} retain local interactions via locality-sensitive hashing, and \citet{li2023importance} pre-build a sparse kernel by importance sampling so each Sinkhorn iteration costs $O(|\mathcal S|)$ for support $\mathcal S$. However, the latter requires
$O(NM)$ preprocessing to construct their sparse kernel. 
Furthermore, these methods all use the independent coupling as the reference measure, so the value of the nonzero entries do not contain geometric information. 

\paragraph{Reference coupling beyond $\mathbf{a}\otimes\mathbf{b}$}

Non-product reference measures have been considered in the Schrödinger-bridge literature~\citep{leonard2014survey}, though they have received little attention in static EOT. \citet{freulon2025entropic} study EOT with a Gaussian reference coupling, but restrict their analysis to Gaussian marginals.

Another framework to efficiently approximate OT is sliced optimal transport (SOT)~\citep{bonneel2015sliced},  which averages 1D Wasserstein distances over random projections, each solved via sorting, giving $O(L(N+M)\log(N+M))$ cost for $L$ directions.
To produce a transport plan in the original space, \citet{muzellec2019subspace,liu2025expected,tanguy2025sliced}
propose \emph{lifting} the 1D optimal matchings back to the original
support points.
For each direction $\theta\in\mathbb{S}^{d-1}$, let
$P_\theta^{\mathrm{SOT}}\in\Pi(\mathbf{a},\mathbf{b})$ denote the
coupling that solves OT in the projected 1D
space.
Averaging over $L$ directions yields the
expected sliced transport plan:
$P^{\mathrm{SOT}}
  =\frac{1}{L}\sum_{\ell=1}^{L}P_{\theta_\ell}^{\mathrm{SOT}}$.
By construction,
$P^{\mathrm{SOT}}\in\Pi(\mathbf{a},\mathbf{b})$ is sparse.
Each 1D plan contributes at most $N+M-1$ nonzero entries~\citep[Theorem~8.1.2]{brualdi2006combinatorial},
so $|\mathcal{S}|=O\bigl(L(N+M)\bigr)$ where $\mathcal{S}=\support(P^{\mathrm{SOT}})$,
which is much smaller than $NM$ when $L\ll\min(N,M)$.
\citet{nguyen2026sliced} observed that the sliced plan
$P^{\mathrm{SOT}}$ is a more informative prior than the independent
coupling and proposed sliced-regularised OT (SROT):
\begin{equation}\label{eq:srot}
  P_{\varepsilon,\gamma}^{\mathrm{SROT}}
  =\argmin_{P\in\Pi(\mathbf{a},\mathbf{b})}
  \langle C,P\rangle
  +\varepsilon\,\KL\bigl(P\,\|\,P^{\mathrm{SOT}}_\gamma\bigr),
\end{equation}
where $P^{\mathrm{SOT}}_\gamma
=(1-\gamma)P^{\mathrm{SOT}}+\gamma\,\mathbf{a}\otimes\mathbf{b}$
is smoothed with $\gamma>0$ to guarantee full support.
SROT yields couplings closer to the true OT plan
than EOT under the same regularisation level,
but the smoothing destroys the sparsity of $P^{\mathrm{SOT}}$.
Each Sinkhorn iteration still costs $O(NM)$, the same as EOT.
Similar to \eqref{eq:eot_val}, the optimal solution leads to a biased discrepancy functional $\mathrm{SROT}_{\varepsilon,\gamma}(\mu,\nu)$.

\section{SinkSLOT: Sinkhorn via Sparse Lifted OT}
\label{sec:sinkslot}

\paragraph{Problem formulation}
\label{sec:formulation}

We propose to solve the SROT problem~\eqref{eq:srot} with the
\emph{unsmoothed}, i.e.\ $\gamma=0$, expected slice transport reference plan $P^{\mathrm{SOT}}$:
\begin{equation}\label{eq:sinkslot}
  P_\varepsilon^{\star}
  =
  \argmin_{P\in\Pi(\mathbf{a},\mathbf{b})}\;
  \langle C,P\rangle
  +\varepsilon\,\KL\bigl(P\,\|\,P^{\mathrm{SOT}}\bigr),
\end{equation}
Absorbing the transport cost into the KL divergence, we obtain the
equivalent formulation
\begin{equation}\label{eq:slot_kl}
  P_\varepsilon^{\star}
  =
  \argmin_{P\in\Pi(\mathbf{a},\mathbf{b})}\;
  \KL\bigl(P\,\|\,K_\varepsilon\bigr),
\end{equation}
where $K_\varepsilon=P^{\mathrm{SOT}}\odot\exp(-C/\varepsilon)$
is the \emph{sparse} Gibbs kernel.
The SinkSLOT discrepancy associated with a SOT plan is
\begin{equation}\label{eq:slot_val}
  \mathrm{SLOT}_\varepsilon(\mu,\nu)
  \;=\;
  \langle C,P_\varepsilon^{\star}\rangle
  +\varepsilon\,\KL(P_\varepsilon^{\star}\|P^{\mathrm{SOT}}).
\end{equation}

Setting $\gamma=0$ confers key advantages over the smoothed
formulation.
Assuming no overlapping projected samples (zero probability under sufficient numerical precision), each 1D optimal matching contributes at most $N+M-1$ nonzero entries to its corresponding $P_\theta^{\mathrm{SOT}}$. 
Consequently, the support of the aggregated lifted plan, $\mathcal{S}:=\support(P^{\mathrm{SOT}})$, satisfies
$|\mathcal{S}|\leq L(N+M-1)$.
Since the Gibbs kernel $K_\varepsilon$ inherits the sparsity pattern of $P^{\mathrm{SOT}}$, the cost of each Sinkhorn iteration is reduced from $O(NM)$ to $O(|\mathcal{S}|)$, and to $O\bigl(L(N+M)\bigr)$ in the worst case.
To avoid infinite costs, optimising the KL divergence forces
$\support(P)\subseteq\mathcal{S}$
for any feasible coupling $P$ with finite objective value. Hence, the minimisation in~\eqref{eq:slot_kl} is implicitly restricted to couplings supported on~$\mathcal{S}$.

\paragraph{Existence, uniqueness, and convergence of sparse Sinkhorn}
\label{sec:convergence}

We first establish that~\eqref{eq:sinkslot} admits a unique solution
of diagonal-scaling form, then show that the sparse Sinkhorn
algorithm converges to it.

\begin{proposition}[Unique diagonal scaling]\label{prop:unique}
Let $K_\varepsilon=P^{\mathrm{SOT}}\odot\exp(-C/\varepsilon)$ be the
sparse Gibbs kernel with support
$\mathcal{S}=\support(P^{\mathrm{SOT}})$.
The SinkSLOT problem~\eqref{eq:slot_kl} has a unique solution of the form
\begin{equation}\label{eq:scaling}
  P_\varepsilon^{\star}
  =\diag(\mathbf{u})\,K_\varepsilon\,\diag(\mathbf{v}),
\end{equation}
for positive scaling vectors
$\mathbf{u}\in\mathbb{R}_{>0}^N$ and
$\mathbf{v}\in\mathbb{R}_{>0}^M$.
\end{proposition}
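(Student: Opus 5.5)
Our plan is to prove existence, then uniqueness, then derive the scaling form from first-order optimality. We restrict attention to $\mathcal{S}$ throughout.

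\textbf{Existence.} The feasible set is nonempty: $P^{\mathrm{SOT}}$ itself lies in $\Pi(\mathbf{a},\mathbf{b})$, is supported on $\mathcal{S}$, and has finite objective $\KL(P^{\mathrm{SOT}}\|K_\varepsilon)=\langle C,P^{\mathrm{SOT}}\rangle/\varepsilon+\text{const}<\infty$. Moreover, $\Pi(\mathbf{a},\mathbf{b})\cap\{\support(P)\subseteq\mathcal{S}\}$ is a compact polytope. The map $P\mapsto\KL(P\|K_\varepsilon)$ is continuous on this polytope, since $t\log t$ is continuous at $0$ and $K_{\varepsilon,ij}>0$ on $\mathcal{S}$. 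A minimiser therefore exists.

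\textbf{Uniqueness.} The objective is strictly convex in the entries $(P_{ij})_{(i,j)\in\mathcal{S}}$, because $t\mapsto t\log t$ is strictly convex. Entries outside $\mathcal{S}$ are forced to zero. Two distinct minimisers would then give a strictly smaller value at their midpoint, which is a contradiction.

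\textbf{Scaling form.} This is the delicate part, because the dual vectors must be positive and finite. That requires every $P^\star_{ij}$, $(i,j)\in\mathcal{S}$, to be strictly positive. We argue this in two steps.

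First, we show that the minimiser has full support on $\mathcal{S}$. Take $P^{\mathrm{SOT}}$, which has full support on $\mathcal{S}$ by definition, and set $P_t=(1-t)P^\star+tP^{\mathrm{SOT}}$, which is feasible. Suppose $P^\star_{ij}=0$ for some $(i,j)\in\mathcal{S}$. The one-sided derivative of the objective at $t=0^+$ contains the term $\log(tP^{\mathrm{SOT}}_{ij}/K_{ij})\cdot P^{\mathrm{SOT}}_{ij}\to-\infty$. Every other coordinate contributes a finite derivative: where $P^\star_{kl}>0$ the derivative is finite, and the remaining zero coordinates also push toward $-\infty$. So the objective strictly decreases for small $t>0$, contradicting optimality.

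Second, with all entries in $\mathcal{S}$ strictly positive, $P^\star$ lies in the relative interior of the face $\{P\ge0,\ \support P\subseteq\mathcal{S}\}$. The only active constraints are then the affine marginal constraints, and Lagrange multiplier theory applies with no inequality multipliers. Stationarity gives
\begin{equation*}
\log(P^\star_{ij}/K_{ij})=f_i+g_j \quad\text{for all } (i,j)\in\mathcal{S},
\end{equation*}
for some real $f,g$. Setting $u_i=e^{f_i}$ and $v_j=e^{g_j}$ gives $P^\star=\diag(\mathbf{u})K_\varepsilon\diag(\mathbf{v})$ on $\mathcal{S}$, with both sides zero off $\mathcal{S}$. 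The vectors $\mathbf{u},\mathbf{v}$ are positive and finite.

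\textbf{Remaining points.} We note that $(\mathbf{u},\mathbf{v})$ is unique only up to $(\lambda\mathbf{u},\mathbf{v}/\lambda)$ on each connected component of the bipartite graph $\mathcal{S}$. The statement's uniqueness refers to the plan $P^\star_\varepsilon$, which is what was shown above. If a converse is wanted, any $P$ of the form \eqref{eq:scaling} lying in $\Pi(\mathbf{a},\mathbf{b})$ satisfies the KKT conditions of this convex problem and hence equals $P^\star_\varepsilon$.
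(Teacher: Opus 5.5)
Your proof is correct. It follows the same skeleton as the paper's proof: $P^{\mathrm{SOT}}$ witnesses feasibility, strict convexity gives uniqueness, and Lagrangian stationarity on $\mathcal{S}$ gives the scaling form. Where you differ is in the two delicate points.

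\textbf{Existence.} The paper attributes existence to strict convexity, which really only yields uniqueness. You supply the missing ingredient: compactness of the restricted polytope together with continuity of the objective there.

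\textbf{Positivity of $\mathbf{u},\mathbf{v}$.} The paper writes the stationarity condition $\log\bigl(P_{ij}/(K_\varepsilon)_{ij}\bigr)=f_i+g_j$ without first checking that the minimiser is positive on $\mathcal{S}$. It instead appeals to the matrix-scaling criterion of \citet{kalantari2008complexity}: $K_\varepsilon$ is $(\mathbf{a},\mathbf{b})$-scalable iff some matrix with the same zero pattern lies in $\Pi(\mathbf{a},\mathbf{b})$, with $P^{\mathrm{SOT}}$ as the witness. You prove full support directly, by perturbing toward $P^{\mathrm{SOT}}$ and using the infinite slope of $t\log t$ at $0$. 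After that only affine constraints are active, so the multipliers are finite and $\mathbf{u},\mathbf{v}$ are strictly positive.

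\textbf{Comparison.} Both routes rest on the same fact: $P^{\mathrm{SOT}}\in\Pi(\mathbf{a},\mathbf{b})$ has exactly the zero pattern of $K_\varepsilon$. In effect you re-prove the relevant direction of the scaling theorem in this setting. Your version is self-contained and closes the gap between the formal Lagrangian computation and the existence of finite multipliers. The paper's version is shorter and links the result to the scaling literature it relies on later for disconnected supports.

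Two of your closing remarks are exactly what is needed to make the paper's argument airtight. First, to pass from the paper's scalability check to the minimiser, one needs the KKT-sufficiency converse you state at the end. Second, your observation that $(\mathbf{u},\mathbf{v})$ is determined only up to $(\lambda\mathbf{u},\mathbf{v}/\lambda)$ on each connected component correctly clarifies that the uniqueness in the statement refers to $P^\star_\varepsilon$, not to the scaling vectors.
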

\noindent The proof is in Appendix~\ref{app:proof_unique}.

\begin{proposition}[Convergence of sparse Sinkhorn]\label{prop:sinkhorn}
Let $(\mathbf{u}^\ell,\mathbf{v}^\ell)$ be defined by
\begin{equation}\label{eq:sparse_sinkhorn}
  \mathbf{u}^{\ell+1}
  =\mathbf{a}\oslash(K_\varepsilon\,\mathbf{v}^\ell),
  \qquad
  \mathbf{v}^{\ell+1}
  =\mathbf{b}\oslash(K_\varepsilon^\top\mathbf{u}^{\ell+1}),
\end{equation}
initialised at $\mathbf{u}^0=\mathbf{v}^0=\mathbf{1}$.
$P^\ell=\diag(\mathbf{u}^\ell)\,K_\varepsilon\,\diag(\mathbf{v}^\ell)$
converges to the unique solution~$P_\varepsilon^{\star}$
of~\eqref{eq:sinkslot}.
\end{proposition}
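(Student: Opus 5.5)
The plan is to reduce Proposition~\ref{prop:sinkhorn} to the classical convergence theory for matrix scaling of nonnegative matrices with a prescribed zero pattern. The classical theorem (Sinkhorn--Knopp, generalised to rectangular matrices and arbitrary positive marginals by Brualdi, Menon and Sinkhorn) says the following. If a nonnegative matrix $K$ admits some matrix $P\in\Pi(\mathbf{a},\mathbf{b})$ with exactly the same support (\emph{total support}/exact scalability), then the alternating normalisations~\eqref{eq:sparse_sinkhorn} converge, and the limit $\diag(\mathbf{u})K\diag(\mathbf{v})$ is the unique such scaling. So the work is to verify the hypothesis for $K_\varepsilon$ and identify the limit with $P_\varepsilon^\star$ via Proposition~\ref{prop:unique}.

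First, since $\exp(-C/\varepsilon)$ is entrywise positive, $\support(K_\varepsilon)=\mathcal{S}=\support(P^{\mathrm{SOT}})$. Moreover $P^{\mathrm{SOT}}$ itself lies in $\Pi(\mathbf{a},\mathbf{b})$ and has support exactly $\mathcal{S}$. Hence $K_\varepsilon$ has a doubly-stochastic-type pattern matching the target marginals. In particular, every row and column of $K_\varepsilon$ has a nonzero entry (because $a_i,b_j>0$), so all iterates $\mathbf{u}^\ell,\mathbf{v}^\ell$ are well defined and strictly positive.

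Second, I would prove convergence self-containedly by the standard Bregman/KL projection argument rather than merely citing. The iterates $P^{\ell+1/2}=\diag(\mathbf{u}^{\ell+1})K_\varepsilon\diag(\mathbf{v}^\ell)$ and $P^{\ell+1}$ are the KL (I-)projections of the previous iterate onto the affine sets $\{P\mathbf{1}=\mathbf{a}\}$ and $\{P^\top\mathbf{1}=\mathbf{b}\}$, restricted to matrices supported on $\mathcal{S}$. For any $Q\in\Pi(\mathbf{a},\mathbf{b})$ with $\support(Q)\subseteq\mathcal{S}$, the Pythagorean identity gives
\[
\KL(Q\|P^{\ell})=\KL(Q\|P^{\ell+1/2})+\KL(P^{\ell+1/2}\|P^{\ell}),
\]
and similarly for the column step. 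Taking $Q=P^\star_\varepsilon$, which exists by Proposition~\ref{prop:unique}, $\KL(P^\star_\varepsilon\|P^\ell)$ is nonincreasing and finite, since $P^0=K_\varepsilon$ has support $\mathcal{S}$. The telescoping sum then shows $\sum_\ell \KL(P^{\ell+1/2}\|P^\ell)<\infty$, so the marginal violations tend to zero.

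Third, boundedness: all $P^\ell$ have total mass one after the first half-step and live on the finite set $\mathcal{S}$, so they lie in a compact set. Any cluster point $\bar P$ satisfies both marginal constraints. The remaining issue is to show that $\bar P$ has the diagonal-scaling form with positive $\mathbf{u},\mathbf{v}$, equivalently that $\log(\bar P_{ij}/K_{\varepsilon,ij})=f_i+g_j$ on $\mathcal{S}$. I expect this to be the \textbf{main obstacle}, since the scalings themselves could a priori degenerate, with some entries tending to $0$ or $\infty$. To handle it I would use finiteness of $\KL(P^\star_\varepsilon\|P^\ell)$: because $P^\star_\varepsilon$ has full support $\mathcal{S}$ by Proposition~\ref{prop:unique}, a bounded KL forces every entry of $P^\ell$ on $\mathcal{S}$ to stay bounded away from zero. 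Hence $\bar P$ has support $\mathcal{S}$.

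The log-ratios $\log(P^\ell_{ij}/K_{\varepsilon,ij})=\log u_i^\ell+\log v_j^\ell$ lie in the linear space $\{f_i+g_j\}$ restricted to $\mathcal{S}$. That space is closed (finite-dimensional), so the limit log-ratio is also of the form $f_i+g_j$. Thus $\bar P$ is a diagonal scaling of $K_\varepsilon$ lying in $\Pi(\mathbf{a},\mathbf{b})$. By the uniqueness part of Proposition~\ref{prop:unique} (KKT conditions of the strictly convex problem~\eqref{eq:slot_kl} on $\mathcal{S}$), $\bar P=P^\star_\varepsilon$. Since every cluster point coincides with $P^\star_\varepsilon$, the whole sequence $P^\ell$ converges to $P^\star_\varepsilon$.
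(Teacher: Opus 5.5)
Your argument is correct. Like the paper, you treat each half-step of \eqref{eq:sparse_sinkhorn} as the KL (I-)projection of the current iterate onto the row- or column-marginal set. The difference is in how convergence is obtained. The paper stops at that identification and invokes Csisz\'ar's Theorem~3.2, with $P^{\mathrm{SOT}}$ witnessing the domain condition; the theorem then supplies both convergence and the identification of the limit as the I-projection of $K_\varepsilon$ onto $\Pi(\mathbf{a},\mathbf{b})$. You instead reprove that theorem in this finite setting, using:
\begin{enumerate}
\item the Pythagorean equality for linear families;
\item monotone decrease and telescoping of $\KL(P_\varepsilon^\star\|P^\ell)$, which forces the marginal violations to zero;
\item compactness, together with a uniform positive lower bound on the entries of $P^\ell$ on $\mathcal{S}$;
\item closedness of the finite-dimensional subspace $\{(f_i+g_j)_{(i,j)\in\mathcal{S}}\}$, which shows that every cluster point is a positive diagonal scaling of $K_\varepsilon$.
\end{enumerate}
The paper's route is a two-line reduction. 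Yours is self-contained and makes explicit what the citation leaves implicit: the limit keeps full support on $\mathcal{S}$ and is a diagonal scaling. You get this without ever bounding $\mathbf{u}^\ell,\mathbf{v}^\ell$ individually, whose representation of $P^\ell$ is non-unique (one gauge per connected component of $G_\mathcal{S}$).

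Two small points would tighten it.
\begin{enumerate}
\item The final identification $\bar P=P_\varepsilon^\star$ really uses sufficiency of the stationarity conditions, not the uniqueness statement of Proposition~\ref{prop:unique} as written. It is cleanest to record the identity $\KL(Q\|K_\varepsilon)=\KL(Q\|\bar P)+\KL(\bar P\|K_\varepsilon)$ for every $Q\in\Pi(\mathbf{a},\mathbf{b})$ supported on $\mathcal{S}$. This follows from equal total masses and $\sum_{ij}Q_{ij}(f_i+g_j)=\sum_i a_if_i+\sum_j b_jg_j$.
\item Consider running the Pythagorean argument with $Q=P^{\mathrm{SOT}}$ instead of $Q=P_\varepsilon^\star$. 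Since $P^{\mathrm{SOT}}$ has support exactly $\mathcal{S}$ by construction, the lower bound on the entries no longer relies on the full-support part of Proposition~\ref{prop:unique}. Your argument then yields that part of Proposition~\ref{prop:unique} as a by-product.
\end{enumerate}
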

\noindent The proof is in Appendix~\ref{app:proof_sinkhorn}.

\paragraph{SinkSLOT pipeline, computational complexity}
\label{sec:complexity}

Figure~\ref{fig:overview} and Algorithm~\ref{alg:sinkslot} summarise SinkSLOT.
The total cost decomposes into (i) construction of the sparse SOT plan
and (ii) sparse Sinkhorn iterations.

\begin{algorithm}[t]
\caption{SinkSLOT pseudo-code}
\label{alg:sinkslot}
\begin{algorithmic}[1]
\REQUIRE
    Source $\{(x_i,a_i)\}_{i=1}^N$,
    target $\{(y_j,b_j)\}_{j=1}^M$,
    $L$ projections,
    regularisation $\varepsilon>0$,
    and tolerance $\tau>0$
\ENSURE
    Sparse coupling $P_\varepsilon^\star$ and discrepancy
    $\mathrm{SLOT}_\varepsilon(\mu,\nu)$

\STATE Sample
    $\theta_1,\dots,\theta_L
    \overset{\mathrm{i.i.d.}}{\sim}
    \mathcal{U}(\mathbb{S}^{d-1})$

\FOR{$\ell=1,\dots,L$}
    \STATE Project:
        $\hat{x}_i=\theta_\ell^\top x_i$ and
        $\hat{y}_j=\theta_\ell^\top y_j$
    \STATE Solve the 1D OT problem by sorting
        $\to P_{\theta_\ell}^{\mathrm{SOT}}$
\ENDFOR

\STATE
    $P^{\mathrm{SOT}}
    =\tfrac{1}{L}\sum_{\ell=1}^L
    P_{\theta_\ell}^{\mathrm{SOT}}$
    \hfill
    \COMMENT{sparse, $\mathrm{nnz}\leq L(N+M)$}

\STATE Compute cost $C_{ij} = c(x_i,y_j)$,
    $\forall (i,j)\in\support(P^{\mathrm{SOT}})$

\STATE Form sparse log-kernel:
    $\log(K_\varepsilon)_{ij}
    =\log P^{\mathrm{SOT}}_{ij}-C_{ij}/\varepsilon$

\STATE Initialise
    $\boldsymbol{\varphi}^{(0)}=\mathbf{0}_N$ and
    $\boldsymbol{\psi}^{(0)}=\mathbf{0}_M$

\REPEAT
    \STATE
        $\varphi_i^{(\ell+1)}
        =\log a_i
        -\LSE_{j\in\mathcal{S}_i}
        \big(
            \log(K_\varepsilon)_{ij}+\psi_j^{(\ell)}
       \big ),
        \forall i$

    \STATE
        $\psi_j^{(\ell+1)}
        =\log b_j
        -\LSE_{i\in\mathcal{S}^j}
        \big(
            \log(K_\varepsilon)_{ij}+\varphi_i^{(\ell+1)}
        \big),
        \forall j$
\UNTIL{marginal violation $<\tau$}

\STATE
    $P_\varepsilon^\star
    =\operatorname{diag}(e^{\boldsymbol{\varphi}})
    K_\varepsilon
    \operatorname{diag}(e^{\boldsymbol{\psi}})$

\STATE Compute
    $\mathrm{SLOT}_\varepsilon(\mu,\nu)$
    from $P_\varepsilon^\star$

\RETURN
    $P_\varepsilon^\star$ and
    $\mathrm{SLOT}_\varepsilon(\mu,\nu)$
\end{algorithmic}
\end{algorithm}

\begin{table}[t]
\centering
\caption{Computational complexity of the reference coupling and each Sinkhorn iteration.
Here, $S=|\mathcal{S}|\leq L(N+M-1)$ denotes the number of nonzeros in the SOT plan,
and we assume $N=M$ for simplicity.}
\label{tab:complexity}
\begin{tabular}{@{}lccc@{}}
\toprule
\textbf{Method} &
\textbf{Reference coupling} &
\textbf{Reference coupling cost} &
\textbf{Sinkhorn iter.} \\
\midrule
EOT~\citep{cuturi2013sinkhorn}
& $\mathbf{a}\otimes\mathbf{b}$
& $O(N^2)$
& $O(N^2)$ \\

SROT~\citep{nguyen2026sliced}
& $(1-\gamma)P^{\mathrm{SOT}}+\gamma\,\mathbf{a}\otimes\mathbf{b}$
& $O(LNd+LN\log N+N^2)$
& $O(N^2)$ \\

\textbf{SinkSLOT (ours)}
& $P^{\mathrm{SOT}}$
& $O(LNd+LN\log N)$
& $O(S)$ \\
\bottomrule
\end{tabular}
\end{table}

Stage 1 corresponds to the SOT construction.
For each of the $L$ directions, projecting $N+M$ points costs
$O\bigl((N+M)d\bigr)$, while sorting the projected source and target points costs
$O(N\log N+M\log M)$. Constructing the resulting 1D transport plan requires $O(N+M)$ operations and produces at most $N+M-1$ nonzero entries. Accumulating the $L$ sparse plans therefore costs
$O\bigl(L(N+M)\bigr)$. Hence, the total cost of this stage is
$O\!\left(
L\bigl((N+M)d+N\log N+M\log M\bigr)
\right)$.

In Stage 2, the sparse Sinkhorn is applied.
Each iteration of~\eqref{eq:sparse_sinkhorn} performs two sparse
matrix-vector products on $K_\varepsilon$,
which has $|\mathcal{S}|\leq L(N+M)$ nonzero entries, costing $O(|\mathcal{S}|)$. 
For $T$ iterations until convergence,
the Sinkhorn stage costs $O\bigl(T\cdot L(N+M)\bigr)$. 
The cost entries $C_{ij}$ are computed \emph{only} for $(i,j)\in\mathcal{S}$ at $O(d)$ per entry, adding $O(|\mathcal{S}|\,d)$.

For small~$\varepsilon$, the entries of $K_\varepsilon$ can span
many orders of magnitude.
We work entirely in the log domain,
defining 
$\boldsymbol{\varphi}^\ell=\log\mathbf{u}^\ell$,
$\boldsymbol{\psi}^\ell=\log\mathbf{v}^\ell$
and $\log(K_\varepsilon)_{ij} = \log P^{\mathrm{SOT}}_{ij}
      -\tfrac{C_{ij}}{\varepsilon}$.
The Sinkhorn updates then read
\begin{align}
  \varphi_i^{(\ell+1)}
  &=\log a_i
   -\mathrm{LSE}_{j\in\mathcal{S}_i}
    \Bigl(\log(K_\varepsilon)_{ij}
      +\psi_j^{(\ell)}\Bigr),
  \label{eq:logrow}\\
  \psi_j^{(\ell+1)}
  &=\log b_j
   -\mathrm{LSE}_{i\in\mathcal{S}^j}
    \Bigl(\log(K_\varepsilon)_{ij}
      +\varphi_i^{(\ell+1)}\Bigr),
  \label{eq:logcol}
\end{align}
where $\mathcal{S}_i=\{j:(i,j)\in\mathcal{S}\}$,
$\mathcal{S}^j=\{i:(i,j)\in\mathcal{S}\}$, log-sum-exp
($\mathrm{LSE}$) uses an online max-shift trick for stability~\cite{ye2026flashsinkhorn,milakov2018online,nowozin2016streaming}.

The total complexity is
$O\bigl(
L\bigl((N+M)(d+T)+N\log N+M\log M\bigr)
\bigr)$.
In terms of memory, 
only the sparse kernel ($O(|\mathcal{S}|)$),
the scaling vectors ($O(N+M)$), and the cost entries on
$\mathcal{S}$ ($O(|\mathcal{S}|)$) need to be stored. This sparsity is intrinsic to our formulation. Even without matrix-free backends such as KeOps \citep{feydy2019interpolating} or fused Triton kernels \citep{ye2026flashsinkhorn}, the full $N\times M$ cost or kernel matrices are never formed.

\paragraph{SinkSLOT as a divergence}
\label{sec:divergence}

A practical advantage of the unsmoothed formulation is that the SinkSLOT
functional~\eqref{eq:slot_val} is already a well-behaved discrepancy
\emph{without debiasing}.
Recall that with standard entropic regularisation,
$\mathrm{OT}_\varepsilon(\mu,\mu)>0$ in general.
\citet{feydy2019interpolating} proposed to subtract self-transport terms to obtain a proper divergence:
\begin{equation*}
S_{\varepsilon}(\alpha,\beta)
=
\operatorname{OT}_{\varepsilon}(\alpha,\beta)
-\frac{1}{2}\operatorname{OT}_{\varepsilon}(\alpha,\alpha)
-\frac{1}{2}\operatorname{OT}_{\varepsilon}(\beta,\beta)
\end{equation*}
The smoothed $\mathrm{SROT}_{\varepsilon,\gamma}$ similarly requires
debiasing. 
In contrast, we show that SinkSLOT bypasses these issues entirely.

\begin{proposition}[$\mathrm{SLOT}_\varepsilon$ is a divergence]\label{prop:divergence}
Assume $C_{ij}=\|x_i-y_j\|^p$ for some $p\ge1$,
and that the support points of $\mu$ and $\nu$ are in
general position.\footnote{%
  General position ensures that, for almost every projection
  direction, projected coordinates are distinct, so the 1D OT
  matching is uniquely determined.}
Then, for a fixed set of projection directions,
$\mathrm{SLOT}_\varepsilon$ satisfies:
\begin{enumerate}
\item \textbf{Non-negativity.}\;
  $\mathrm{SLOT}_\varepsilon(\mu,\nu)\ge0$.
\item \textbf{Identity of indiscernibles.}\;
  $\mathrm{SLOT}_\varepsilon(\mu,\nu)=0
   \Leftrightarrow \mu=\nu$.
\item \textbf{Symmetry.}\;
  $\mathrm{SLOT}_\varepsilon(\mu,\nu)
   =\mathrm{SLOT}_\varepsilon(\nu,\mu)$.
\end{enumerate}
In particular, $\mathrm{SLOT}_\varepsilon$ is a divergence
in the sense of \citet{feydy2019interpolating}, with no
debiasing required.
\end{proposition}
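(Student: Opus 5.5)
We prove the three properties separately, working with the objective of \eqref{eq:sinkslot} evaluated at its unique minimiser $P_\varepsilon^\star$ (Proposition~\ref{prop:unique}).

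\textbf{Non-negativity.} Both terms in \eqref{eq:slot_val} are non-negative. First, $C_{ij}=\|x_i-y_j\|^p\ge0$ and $P_\varepsilon^\star\ge0$, so $\langle C,P_\varepsilon^\star\rangle\ge0$. Second, $P_\varepsilon^\star$ and $P^{\mathrm{SOT}}$ both lie in $\Pi(\mathbf{a},\mathbf{b})$ and hence have equal total mass. The generalised KL therefore reduces to the ordinary KL between two probability vectors, which is non-negative by Gibbs' inequality. Consequently $\mathrm{SLOT}_\varepsilon(\mu,\nu)\ge0$.

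\textbf{Identity of indiscernibles.} For the direction ($\Leftarrow$), take $\mu=\nu$, so $x_i=y_i$ and $\mathbf{a}=\mathbf{b}$. Here I should first deduplicate the supports: the general-position assumption (footnote) is meant to exclude coincident points within each measure. Every projection $\theta$ sends $x_i$ and $y_i$ to the same value, so sorting yields the identical permutation on both sides. Each 1D plan is then the diagonal coupling $\diag(\mathbf{a})$, and so is $P^{\mathrm{SOT}}$. The only coupling in $\Pi(\mathbf{a},\mathbf{a})$ supported on the diagonal is $\diag(\mathbf{a})$ itself. Since finite objective value forces $\support(P)\subseteq\mathcal{S}$, we get $P_\varepsilon^\star=\diag(\mathbf{a})$. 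Both terms then vanish, because $C_{ii}=0$ and the KL of a plan against itself is $0$.

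For the direction ($\Rightarrow$), suppose $\mathrm{SLOT}_\varepsilon(\mu,\nu)=0$. Both terms are non-negative and $\varepsilon>0$, so each must vanish. From $\langle C,P_\varepsilon^\star\rangle=0$, the plan $P_\varepsilon^\star$ charges only pairs with $\|x_i-y_j\|=0$, i.e.\ $x_i=y_j$. Since $P_\varepsilon^\star\in\Pi(\mathbf{a},\mathbf{b})$ with positive weights, every $x_i$ is sent to coincident $y_j$'s. Pushing forward the first marginal therefore gives $\mu=\nu$ as measures, i.e.\ the transport cost is $W_p^p$-type zero. The KL term is not even needed for this direction.

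\textbf{Symmetry.} Swapping $\mu$ and $\nu$ with the same directions $\theta_\ell$ transposes each 1D sorted matching. The 1D OT plan is unique under general position, so $P_{\theta}^{\mathrm{SOT}}(\nu,\mu)=P_{\theta}^{\mathrm{SOT}}(\mu,\nu)^\top$, and hence $P^{\mathrm{SOT}}(\nu,\mu)=P^{\mathrm{SOT}}(\mu,\nu)^\top$. The cost $\|x-y\|^p$ is symmetric, so $C(\nu,\mu)=C(\mu,\nu)^\top$. Transposition is a bijection from $\Pi(\mathbf{a},\mathbf{b})$ onto $\Pi(\mathbf{b},\mathbf{a})$, and it preserves both $\langle C,\cdot\rangle$ and KL. The minimiser for the swapped problem is therefore $(P_\varepsilon^\star)^\top$, and the two values coincide.

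\textbf{Main obstacle.} The delicate point is rigour about ties in 1D sorting. Uniqueness of the 1D plan underlies symmetry, and the diagonal structure underlies the $\mu=\nu$ case. When $\mu=\nu$ the projected values coincide across the two measures for every $\theta$, so "general position" must be read as distinctness within each measure only. I would state explicitly that the 1D plan is the monotone (north-west corner) coupling. This coupling is canonical and transposes correctly even in the presence of cross-measure ties, which is what both arguments need.
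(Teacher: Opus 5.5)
Your proof is correct and follows essentially the same route as the paper's: non-negativity from $C\ge 0$ plus Gibbs' inequality, the diagonal lifted plan $P^{\mathrm{SOT}}=\diag(\mathbf{a})$ when $\mu=\nu$, a zero-cost mass-balance argument for the converse, and transposition of the sorted 1D matchings for symmetry. Your deviations are sound refinements: in the $(\Rightarrow)$ direction you rightly note that $\langle C,P_\varepsilon^\star\rangle=0$ with $P_\varepsilon^\star\in\Pi(\mathbf{a},\mathbf{b})$ already suffices, so the paper's step through $P_\varepsilon^\star=P^{\mathrm{SOT}}$ is unnecessary; and you pin the 1D plan down as the sorting-based monotone coupling rather than relying on uniqueness of 1D OT (which fails, e.g., for $p=1$) or on a general-position assumption that cannot hold across the two measures when $\mu=\nu$, which tightens a looseness the paper's footnote leaves open.
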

\noindent The proof is in Appendix~\ref{app:proof_divergence}.

\paragraph{Linear convergence rate}
\label{sec:rate}

Having established convergence of the sparse Sinkhorn iterates
(Proposition~\ref{prop:sinkhorn}), we now 
show that convergence is linear (geometric) and give an explicit expression for the
asymptotic contraction factor.

\begin{definition}[Support graph]\label{def:support_graph}
The \emph{bipartite support graph}
$G_\mathcal{S}=(U\cup V,\,\mathcal{S})$
has row vertices $U=\{1,\dots,N\}$, column vertices $V=\{1,\dots,M\}$,
and edge set $\mathcal{S}=\support(P^{\mathrm{SOT}})$.
\end{definition}

\begin{proposition}[Spectral gap]\label{prop:spectral_gap}
Define the normalised plan matrix
\begin{equation}\label{eq:normalised_plan}
  A = \diag(\mathbf{a})^{-1/2}\; P_\varepsilon^{\star}\;
  \diag(\mathbf{b})^{-1/2}
  \;\in\;\mathbb{R}^{N\times M},
\end{equation}
and let $\sigma_1\geq\sigma_2\geq\cdots\geq 0$ denote its singular
values.  Then:
\begin{enumerate}
\item $\sigma_1=1$, with left and right singular vectors
  $\sqrt{\mathbf{a}}$ and $\sqrt{\mathbf{b}}$, respectively.
\item $\sigma_2<1$ if and only if $G_\mathcal{S}$ is connected.
\end{enumerate}
\end{proposition}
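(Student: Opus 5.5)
The plan is to work with the symmetric positive semidefinite matrix $B=AA^\top=\diag(\mathbf{a})^{-1/2}P_\varepsilon^\star\diag(\mathbf{b})^{-1}P_\varepsilon^{\star\top}\diag(\mathbf{a})^{-1/2}$, whose eigenvalues are $\sigma_k^2$. By Proposition~\ref{prop:unique}, $P_\varepsilon^\star\in\Pi(\mathbf{a},\mathbf{b})$, so $P_\varepsilon^\star\mathbf{1}_M=\mathbf{a}$ and $P_\varepsilon^{\star\top}\mathbf{1}_N=\mathbf{b}$. Moreover, since $\mathbf{u},\mathbf{v}>0$, the support of $P_\varepsilon^\star$ is exactly $\mathcal{S}$.

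\textbf{Part 1.} First I will check directly that $A\sqrt{\mathbf{b}}=\diag(\mathbf{a})^{-1/2}P_\varepsilon^\star\mathbf{1}_M=\diag(\mathbf{a})^{-1/2}\mathbf{a}=\sqrt{\mathbf{a}}$, and likewise $A^\top\sqrt{\mathbf{a}}=\sqrt{\mathbf{b}}$. Both vectors have unit norm because $\mathbf{a},\mathbf{b}$ are in the simplices, so $1$ is a singular value with this pair of singular vectors. To show $\sigma_1\le1$, I will use $|x^\top A y|=\bigl|\sum_{ij}P_{ij}\frac{x_i}{\sqrt{a_i}}\frac{y_j}{\sqrt{b_j}}\bigr|$ together with Cauchy--Schwarz under the weights $P_{ij}$:
\[
|x^\top A y|\le\Bigl(\sum_{ij}P_{ij}x_i^2/a_i\Bigr)^{1/2}\Bigl(\sum_{ij}P_{ij}y_j^2/b_j\Bigr)^{1/2}=\|x\|\,\|y\|,
\]
where the equality uses the marginal constraints. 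An equivalent route is to note that $\diag(\mathbf{a})^{-1}P\diag(\mathbf{b})^{-1}P^\top$ is row-stochastic and similar to $B$.

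\textbf{Part 2.} Here I will identify the multiplicity of the eigenvalue $1$ of $B$. The matrix $M=\diag(\mathbf{a})^{-1}P\diag(\mathbf{b})^{-1}P^\top$ is the transition matrix of the two-step random walk $U\to V\to U$ on $G_\mathcal{S}$. It is reversible with respect to $\mathbf{a}$ and similar to $B$ via $\diag(\mathbf{a})^{1/2}$. A cleaner way to handle the multiplicity is the Dirichlet-form identity
\[
\|x\|^2+\|y\|^2-2x^\top Ay=\sum_{(i,j)\in\mathcal{S}}P_{ij}\Bigl(\frac{x_i}{\sqrt{a_i}}-\frac{y_j}{\sqrt{b_j}}\Bigr)^2 ,
\]
obtained by expanding the square and using the marginals. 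The key consequence is that a unit pair $(x,y)$ attains $x^\top Ay=1$ if and only if $f_i=x_i/\sqrt{a_i}$ and $g_j=y_j/\sqrt{b_j}$ satisfy $f_i=g_j$ for every edge $(i,j)\in\mathcal{S}$.

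\emph{If $G_\mathcal{S}$ is connected}, such $f,g$ must be constant on the whole graph, so $(x,y)\propto(\sqrt{\mathbf{a}},\sqrt{\mathbf{b}})$. Hence the top singular subspace is one-dimensional, and $\sigma_2<1$ follows by Courant--Fischer applied to $B$: $\sigma_2^2$ is the maximum of $x^\top Bx$ over unit $x\perp\sqrt{\mathbf{a}}$, and this maximum is attained by compactness.

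\emph{If $G_\mathcal{S}$ is disconnected}, I take a connected component $W$. Every vertex has positive mass, so each component contains vertices on both sides, or else an isolated vertex would have zero marginal. Setting $f=\mathbf{1}_{W\cap U}$ and $g=\mathbf{1}_{W\cap V}$ gives a second singular pair with value $1$, and it is linearly independent of the first. Therefore $\sigma_2=1$.

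\textbf{Main obstacle.} The step needing the most care is the equality case in Part 2. Specifically, I must make sure that the second-singular-value characterisation correctly excludes the trivial pair. This means working with $x\perp\sqrt{\mathbf{a}}$ in $B$ and not with pairs. I also need the fact, supplied by the positivity of $\mathbf{u},\mathbf{v}$ and of $P^{\mathrm{SOT}}$ on $\mathcal{S}$, that the edges of $P_\varepsilon^\star$ coincide exactly with $\mathcal{S}$, so that connectivity of $G_\mathcal{S}$ is the relevant condition.
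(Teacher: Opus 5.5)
Your proof is correct, but it takes a different route from the paper. The paper's argument is in Steps~2--3 of the proof of Proposition~\ref{prop:rate}. It first verifies $A\sqrt{\mathbf{b}}=\sqrt{\mathbf{a}}$ and $A^\top\sqrt{\mathbf{a}}=\sqrt{\mathbf{b}}$. It then notes that $(A^\top A)_{jj'}>0$ exactly when columns $j,j'$ share a row neighbour in $G_\mathcal{S}$. Connectivity of $G_\mathcal{S}$ therefore makes this column-intersection graph connected, so $A^\top A$ is irreducible and Perron--Frobenius makes the eigenvalue $1$ simple. The disconnected case is handled by a block decomposition of $A$.

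You replace Perron--Frobenius by a variational argument. You use Cauchy--Schwarz with weights $P_{ij}$ to get $\sigma_1\le 1$. You then use the Dirichlet-form identity to show that the unit pairs attaining $x^\top A y=1$ are exactly those with $f_i=g_j$ on every edge, i.e.\ functions constant on connected components. This is the same mechanism as the classical graph-Laplacian proof.

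The paper's route is shorter once Perron--Frobenius is granted. It also works directly with $A^\top A$, which is similar to the Sinkhorn Jacobian needed for the rate. Your route is fully elementary and makes explicit several points the paper leaves implicit:
\begin{enumerate}
\item that $1$ really is the largest singular value (in the paper this only follows because a positive eigenvector of a nonnegative matrix belongs to its spectral radius);
\item that $\support(P_\varepsilon^\star)=\mathcal{S}$ exactly, because $\mathbf{u},\mathbf{v}>0$;
\item that every component contains both row and column vertices, so each block genuinely contributes a unit singular value.
\end{enumerate}
As a by-product, your argument shows that the multiplicity of the singular value $1$ equals the number of connected components. It also gives a variational expression for the gap $1-\sigma_2$.
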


\begin{proposition}[Linear convergence rate]\label{prop:rate}
When $G_\mathcal{S}$ is connected, the sparse Sinkhorn iterates
$P^\ell=\diag(\mathbf{u}^\ell)\,K_\varepsilon\,\diag(\mathbf{v}^\ell)$
converge linearly:
\begin{equation}\label{eq:rate}
  \limsup_{\ell\to\infty}\;
  \bigl\|P^\ell - P_\varepsilon^{\star}\bigr\|_1^{\,1/\ell}
  \;\leq\; \sigma_2^{\,2},
\end{equation}
where $\sigma_2$ is the second singular value of~$A$
from Proposition~\ref{prop:spectral_gap}.
In particular, the number of iterations to achieve
$\|P^\ell-P_\varepsilon^\star\|_1\leq\delta$ is at most
\begin{equation}\label{eq:iter_count}
  T \;=\; O\!\left(
    \frac{\log(1/\delta)}{\log(1/\sigma_2^2)}
  \right),
\end{equation}
giving a total arithmetic cost of
$O\!\bigl(L(N{+}M)\cdot\log(1/\delta)\,/\,\log(1/\sigma_2^2)\bigr)$.

If $G_\mathcal{S}$ is disconnected, the scaling problem decomposes into
independent subproblems on the connected
components~\citep{kalantari2008complexity}, and the above applies to
each with its own second singular value.
\end{proposition}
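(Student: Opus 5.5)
The plan is to linearise the log-domain Sinkhorn map around its fixed point and identify its Jacobian with $A^\top A$ (or $AA^\top$). A standard spectral-radius argument then gives the local linear rate. Write $\boldsymbol{\varphi}^\star,\boldsymbol{\psi}^\star$ for the log-scalings of $P_\varepsilon^\star$ given by Proposition~\ref{prop:unique}. Set $\delta\boldsymbol{\psi}^\ell=\boldsymbol{\psi}^\ell-\boldsymbol{\psi}^\star$, and similarly $\delta\boldsymbol{\varphi}^\ell=\boldsymbol{\varphi}^\ell-\boldsymbol{\varphi}^\star$.

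Differentiating the row update \eqref{eq:logrow} at the fixed point gives
\[
\delta\varphi_i^{\ell+1}\approx-\sum_{j\in\mathcal{S}_i}\frac{P^\star_{ij}}{a_i}\,\delta\psi_j^\ell,
\]
that is, $\delta\boldsymbol{\varphi}^{\ell+1}\approx-\diag(\mathbf{a})^{-1}P^\star\delta\boldsymbol{\psi}^\ell$. The column update \eqref{eq:logcol} similarly gives $\delta\boldsymbol{\psi}^{\ell+1}\approx-\diag(\mathbf{b})^{-1}P^{\star\top}\delta\boldsymbol{\varphi}^{\ell+1}$. Composing the two, the Jacobian of the full iteration $\boldsymbol{\psi}^\ell\mapsto\boldsymbol{\psi}^{\ell+1}$ is
\[
J=\diag(\mathbf{b})^{-1}P^{\star\top}\diag(\mathbf{a})^{-1}P^\star .
\]
This matrix is similar, via $\diag(\mathbf{b})^{1/2}$, to $A^\top A$. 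So its eigenvalues are $\sigma_k^2$, and it is diagonalisable with an orthogonal eigenbasis in the $\diag(\mathbf{b})$-weighted inner product.

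By Proposition~\ref{prop:spectral_gap}, the top eigenvalue $\sigma_1^2=1$ has eigenvector $\mathbf{1}$. This direction is exactly the gauge invariance $(\boldsymbol{\varphi}+c,\boldsymbol{\psi}-c)$: it leaves $P$ unchanged and is not a genuine error mode. I would therefore quotient it out. The iteration does not drift along $\mathbf{1}$ at first order, because the update of $\boldsymbol{\varphi}$ re-centres it. On the complement of $\mathbf{1}$, $J$ has spectral radius $\sigma_2^2$, which is $<1$ when $G_\mathcal{S}$ is connected.

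The main obstacle is converting this local linearisation into the global $\limsup$ statement, and handling the neutral $\mathbf{1}$ mode rigorously. For this I would use Proposition~\ref{prop:sinkhorn}: $P^\ell\to P^\star$, so after normalising the gauge (for instance fixing $\langle\sqrt{\mathbf{b}},\,\cdot\,\rangle$ of $\boldsymbol{\psi}$, or working with the gauge-invariant quantity $P^\ell$ directly), the iterates eventually enter any neighbourhood of the fixed point. Since the map is $C^2$ (log-sum-exp is smooth), the standard Ostrowski-type theorem applies. It says that for every $\eta>0$ the quotient error satisfies $\|\delta\boldsymbol{\psi}^\ell\|\le C_\eta(\sigma_2^2+\eta)^\ell$ eventually. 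Proving this uses a norm adapted to $J$ restricted to $\mathbf{1}^\perp$, in which that restriction has norm at most $\sigma_2^2+\eta$, together with the $O(\|\delta\|^2)$ Taylor remainder. Because $J$ is self-adjoint in the weighted inner product, this adapted norm is simply the weighted Euclidean norm, which makes the step clean.

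Finally I transfer the bound to the plan. On the fixed support $\mathcal{S}$ we have $P^\ell_{ij}=P^\star_{ij}\exp(\delta\varphi_i^\ell+\delta\psi_j^\ell)$, so
\[
\|P^\ell-P^\star\|_1\le\|P^\star\|_1\,\bigl(e^{\|\delta\boldsymbol{\varphi}^\ell\|_\infty+\|\delta\boldsymbol{\psi}^\ell\|_\infty}-1\bigr)=O\bigl(\|\delta\boldsymbol{\varphi}^\ell\|+\|\delta\boldsymbol{\psi}^\ell\|\bigr),
\]
taken modulo gauge. Here $\|\delta\boldsymbol{\varphi}^{\ell}\|$ is controlled by $\|\delta\boldsymbol{\psi}^{\ell-1}\|$ through the first linearised relation. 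Taking $\ell$-th roots and letting $\eta\to0$ gives \eqref{eq:rate}. The iteration count \eqref{eq:iter_count} follows by solving $(\sigma_2^2+\eta)^T\lesssim\delta$. Multiplying by the $O(|\mathcal{S}|)=O(L(N+M))$ per-iteration cost gives the arithmetic bound. For a disconnected $G_\mathcal{S}$, $K_\varepsilon$ is block diagonal after permutation and the updates act independently on each block, so the argument applies componentwise, as in \citet{kalantari2008complexity}.
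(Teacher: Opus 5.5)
Your proposal is correct and follows the paper's proof essentially step for step. Like the paper, you linearise the log-domain cycle to get $J=D_{\mathbf{b}}^{-1}(P_\varepsilon^\star)^\top D_{\mathbf{a}}^{-1}P_\varepsilon^\star$, identify it with $A^\top A$ by similarity, quotient out the gauge direction $\mathbf{1}$, and apply the Ostrowski-type theorem of Ortega--Rheinboldt. You also fill in two steps the paper only asserts: you use Proposition~\ref{prop:sinkhorn}, together with connectivity, to bring the iterates into the basin of that local result, and you transfer the potential error to $\|P^\ell-P_\varepsilon^\star\|_1$.
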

\noindent The proofs are in Appendix~\ref{app:proof_rate}.

The convergence rate $\sigma_2^2$ depends on $\varepsilon$ through the
optimal plan $P_\varepsilon^\star$.
As $\varepsilon\to0$, $P_\varepsilon^\star$ converges to the
KL-selected minimiser of the support-restricted OT problem
(Proposition~\ref{prop:limits} below), and $\sigma_2\to1$: convergence degenerates, a well-known difficulty of
solving near-linear programmes with Sinkhorn.
As $\varepsilon\to\infty$, $P_\varepsilon^\star\to P^{\mathrm{SOT}}$
and $\sigma_2$ is bounded away from~$1$ (for
connected~$G_\mathcal{S}$), giving fast convergence.
This mirrors the standard trade-off in entropic OT between
approximation quality (small~$\varepsilon$) and computational
efficiency (large~$\varepsilon$).

\paragraph{Limiting behaviour as
  $\varepsilon\to0$ and $\varepsilon\to\infty$}
\label{sec:limits}

The regularisation strength~$\varepsilon$ interpolates between two
natural extremes: the sliced plan $P^{\mathrm{SOT}}$
(large~$\varepsilon$) and a \emph{support-restricted} exact OT
solution (small~$\varepsilon$).

\begin{proposition}[Limiting behaviour]\label{prop:limits}
Let $\Pi_\mathcal{S}
=\{P\in\Pi(\mathbf{a},\mathbf{b})
  :\support(P)\subseteq\mathcal{S}\}$
denote the set of couplings supported on~$\mathcal{S}$.
\begin{enumerate}
\item \textbf{Small-$\varepsilon$ limit.}\;
  As $\varepsilon\to0^+$, $P_\varepsilon^\star$ converges to the unique coupling
  $\widehat{P}$ that minimises
  $\KL\bigl(P\,\|\,P^{\mathrm{SOT}}\bigr)$ among all solutions of
  the support-restricted OT problem.

\item \textbf{Large-$\varepsilon$ limit.}\;
  As $\varepsilon\to+\infty$,
  \begin{align}\label{eq:epsinf_limit}
    &P_\varepsilon^\star\;\to\;P^{\mathrm{SOT}}.
  \end{align}
\end{enumerate}
\end{proposition}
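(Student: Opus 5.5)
The plan is to treat both limits as statements about minimisers of the strictly convex family $F_\varepsilon(P)=\langle C,P\rangle+\varepsilon\,\KL(P\|P^{\mathrm{SOT}})$ over the compact polytope $\Pi_\mathcal{S}$. By the discussion after \eqref{eq:slot_kl}, finiteness of the objective forces $\support(P)\subseteq\mathcal{S}$, so the feasible set is exactly $\Pi_\mathcal{S}$. This set is nonempty, since it contains $P^{\mathrm{SOT}}$, and it is compact. Proposition~\ref{prop:unique} gives the unique minimiser $P_\varepsilon^\star\in\Pi_\mathcal{S}$. On $\Pi_\mathcal{S}$ the generalised KL reduces to $\KL(P\|P^{\mathrm{SOT}})=\sum_{(i,j)\in\mathcal{S}}P_{ij}\log(P_{ij}/P^{\mathrm{SOT}}_{ij})$, because both matrices have unit mass. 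This is continuous and strictly convex on $\Pi_\mathcal{S}$, nonnegative, and zero only at $P^{\mathrm{SOT}}$.

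\textbf{Large-$\varepsilon$ limit.} Divide the objective by $\varepsilon$ and compare $P_\varepsilon^\star$ with the competitor $P^{\mathrm{SOT}}$. This gives
\[
\KL(P_\varepsilon^\star\|P^{\mathrm{SOT}})\le \tfrac{1}{\varepsilon}\bigl(\langle C,P^{\mathrm{SOT}}\rangle-\langle C,P_\varepsilon^\star\rangle\bigr)\le \tfrac{1}{\varepsilon}\max_{\mathcal{S}}C,
\]
which tends to $0$ as $\varepsilon\to\infty$. Pinsker's inequality, valid for probability matrices, then yields $\|P_\varepsilon^\star-P^{\mathrm{SOT}}\|_1\to0$, which is \eqref{eq:epsinf_limit}.

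\textbf{Small-$\varepsilon$ limit.} This follows the standard $\Gamma$-convergence/selection argument (Peyr\'e--Cuturi, Prop.~4.1), restricted to $\Pi_\mathcal{S}$.
\begin{enumerate}
\item Let $\mathcal{O}=\argmin_{P\in\Pi_\mathcal{S}}\langle C,P\rangle$. This is a nonempty compact convex face of the polytope.
\item Every $Q\in\mathcal{O}$ has $\KL(Q\|P^{\mathrm{SOT}})<\infty$, since its support lies in $\mathcal{S}$. Strict convexity of KL on this convex set then gives a unique minimiser $\widehat{P}$ of $\KL(\cdot\|P^{\mathrm{SOT}})$ over $\mathcal{O}$.
\item Take any sequence $\varepsilon_k\to0$. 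By compactness, extract a subsequence with $P_{\varepsilon_k}^\star\to\bar P\in\Pi_\mathcal{S}$.
\item Optimality against $\widehat{P}$ gives
\[
\langle C,P_{\varepsilon_k}^\star\rangle+\varepsilon_k\KL(P_{\varepsilon_k}^\star\|P^{\mathrm{SOT}})\le\langle C,\widehat P\rangle+\varepsilon_k\KL(\widehat P\|P^{\mathrm{SOT}}).
\]
\item Since KL is bounded on $\Pi_\mathcal{S}$, letting $k\to\infty$ yields $\langle C,\bar P\rangle\le\langle C,\widehat P\rangle$, so $\bar P\in\mathcal{O}$.
\item Using $\langle C,\widehat P\rangle\le\langle C,P_{\varepsilon_k}^\star\rangle$ in the same inequality and dividing by $\varepsilon_k$ gives $\KL(P_{\varepsilon_k}^\star\|P^{\mathrm{SOT}})\le\KL(\widehat P\|P^{\mathrm{SOT}})$.
\item Continuity of KL on $\Pi_\mathcal{S}$ passes this to the limit: $\KL(\bar P\|P^{\mathrm{SOT}})\le\KL(\widehat P\|P^{\mathrm{SOT}})$. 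Uniqueness of $\widehat P$ then forces $\bar P=\widehat P$.
\item Every subsequential limit therefore equals $\widehat P$, so the whole family converges.
\end{enumerate}

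The main point needing care is continuity and boundedness of KL on $\Pi_\mathcal{S}$. This holds because each reference entry $P^{\mathrm{SOT}}_{ij}$ is strictly positive on $\mathcal{S}$, and $t\log t$ extends continuously to $t=0$. Without this restriction to $\mathcal{S}$, the KL could be infinite at limit points, which is exactly what the unsmoothed setting rules out. Everything else is routine compactness.
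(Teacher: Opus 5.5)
Your proposal is correct and follows essentially the same route as the paper: for the small-$\varepsilon$ limit, compactness of $\Pi_\mathcal{S}$, the optimality inequality tested against $\widehat P$, and continuity and strict convexity of the KL term on $\Pi_\mathcal{S}$ identify every subsequential limit as $\widehat P$; for the large-$\varepsilon$ limit, comparison with $P^{\mathrm{SOT}}$ followed by Pinsker's inequality. The differences are cosmetic: you bound $\langle C,P^{\mathrm{SOT}}\rangle$ by $\max_{\mathcal{S}}C$, and you test optimality directly against $\widehat P$ rather than against an arbitrary $P\in\Pi_\mathcal{S}$ with a uniform KL bound.
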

\noindent The proof is in Appendix~\ref{app:proof_limits}.
Proposition~\ref{prop:limits} reveals that SinkSLOT interpolates
continuously between the two interpretable endpoints.
In practice, a moderate~$\varepsilon$ offers the best trade-off:
a coupling close to true OT with rapid
convergence.

\section{Gradient and Hessian of the functional}
\label{sec:gradients}

When the $\mathrm{SLOT}_\varepsilon$ divergence is used as a loss function for
optimising the source positions~$X$, we require its gradient
$\nabla_X\mathrm{SLOT}_\varepsilon$ and, for second-order methods,
Hessian-vector products
$\mathcal{T}\,A=(\nabla^2_X\mathrm{SLOT}_\varepsilon)\,A$.
We derive both under the practical choice of treating the
SOT reference plan as a \emph{fixed} (detached) prior, and
discuss when this is justified.

\paragraph{Should we differentiate through $P^{\mathrm{SOT}}$?}
\label{sec:gradient_discussion}

The divergence $\mathrm{SLOT}_\varepsilon(\mu,\nu)$
depends on the source positions~$X$ through \emph{two} channels:
(i)~the cost matrix $C_{ij}=\|x_i-y_j\|^2$ and
(ii)~the SOT reference plan $P^{\mathrm{SOT}}(X,Y)$, which itself
depends on~$X$ through the 1D projections and rank-based matchings.
By the envelope theorem (Danskin's theorem), at any point~$X$ where
the projected rank orders are locally constant
(which holds for Lebesgue-almost every~$X$),
the gradient decomposes as
\begin{equation}\label{eq:full_grad}
  \frac{d\,\mathrm{SLOT}_\varepsilon}{dx_k}
  =
  \underbrace{\sum_j P^{\star}_{kj}\,\nabla_{x_k}
      C_{kj}}_{\text{(I) direct / Danskin}}
  +\;\underbrace{\varepsilon\sum_{(i,j)\in\mathcal{S}}
    \!\left(1 - \frac{P^\star_{ij}}{P^{\mathrm{SOT}}_{ij}}\right)
    \frac{\partial P^{\mathrm{SOT}}_{ij}}{\partial
    x_k}}_{\text{(II) reference-plan term}},
\end{equation}
where $P^\star$ is the SinkSLOT solution, dropping the explicit reference to $\varepsilon$ for notational conciseness.
In the classical EOT
setting~\citep{cuturi2013sinkhorn,feydy2019interpolating},
the reference coupling is the \emph{independent} product
$\mathbf{a}\otimes\mathbf{b}$, which does not depend on~$X$,
so term~(II) vanishes identically and the gradient reduces to
term~(I) alone.
In SinkSLOT, however, $P^{\mathrm{SOT}}$ depends on $X$ through
one-dimensional sorting operations, making term~(II) nonzero in
principle.
We advocate treating $P^{\mathrm{SOT}}$ as a \emph{fixed prior}
(i.e., stop-gradient / detach) and using only term~(I),
for the following reasons.

Term (II) is small for moderate~$\varepsilon$.
By Proposition~\ref{prop:limits},
$P^\star\to P^{\mathrm{SOT}}$ as $\varepsilon\to\infty$, so
$P^\star_{ij}/P^{\mathrm{SOT}}_{ij}\to 1$ and term~(II) vanishes.
At moderate~$\varepsilon$, the ratio stays close to~$1$,
especially on the high-mass entries of~$P^{\mathrm{SOT}}$ that
dominate the sum.
Conversely, as $\varepsilon\to0$, $P^\star$ approaches the
support-restricted OT (Proposition~\ref{prop:limits}), whose
gradient depends on the support set~$\mathcal{S}$ (a
combinatorial object) rather than the values of
$P^{\mathrm{SOT}}$.

$P^\mathrm{SOT}$ is locally constant.
Each 1D OT plan $P_\theta^{\mathrm{SOT}}$ is determined by the
rank order of the projected points $\theta^\top x_i$;
this rank order is a piecewise-constant function of~$X$,
introducing discontinuities in
$\partial P^{\mathrm{SOT}}/\partial X$.
Although relaxations exist, they add complexity and lose the
sparsity advantage.
Note that at differentiable points (Lebesgue-almost everywhere),
$\partial P^{\mathrm{SOT}}/\partial X = 0$,
so term~(II) vanishes \emph{exactly}, making stop-gradient not
merely practical but formally correct.

This choice is consistent with EOT practice.
The standard approach in EOT-based
learning~\citep{feydy2019interpolating,cuturi2013sinkhorn}
is to apply Danskin's theorem and differentiate only
through~$C$.
Under the stop-gradient convention, the gradient and Hessian of $\mathrm{SLOT}_\varepsilon$
have \emph{exactly} the same analytic form as those of
EOT~\citep{feydy2019interpolating,ye2026flashsinkhorn},
with the dense plan $P^{\mathrm{EOT}}$ replaced everywhere by the
sparse plan~$P^\star$.
All computational gains therefore come from the sparsity of~$P^\star$:
every transport-vector product costs $O(|\mathcal{S}|)$ instead of
$O(NM)$.

\paragraph{First-order gradient}
\label{sec:first_order}

\begin{proposition}[Gradient of SinkSLOT]\label{prop:gradient}
Assume the squared Euclidean cost $C_{kj}=\|x_k-y_j\|^2$ and
treat $P^{\mathrm{SOT}}$ as fixed.
Then the gradient of $\mathrm{SLOT}_\varepsilon$ with respect to the
source positions is
\begin{equation}\label{eq:grad_slot}
  \nabla_{x_k}\mathrm{SLOT}_\varepsilon
  = 2\!\!\sum_{j\in\mathcal{S}_k}\!P^\star_{kj}\,(x_k-y_j)
  = 2\,a_k\bigl(x_k-T_\varepsilon(x_k)\bigr),
\end{equation}
where  
  $T_\varepsilon(x_k)
  =\frac{1}{a_k}\sum_{j\in\mathcal{S}_k}P^\star_{kj}\,y_j$
is the barycentric projection and $\mathcal{S}_k=\{j:(k,j)\in\mathcal{S}\}$. In matrix form,
\begin{align}\label{eq:grad_matrix}
  \nabla_X\mathrm{SLOT}_\varepsilon
  &= 2\,\diag(\mathbf{a})\bigl(X-T_\varepsilon(X)\bigr),\\
  T_\varepsilon(X)
  &= \diag(\mathbf{a})^{-1}\,P^\star Y.
\end{align}
\end{proposition}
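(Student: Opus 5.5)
We will prove the gradient formula with an envelope (Danskin) argument applied to the value function $\mathrm{SLOT}_\varepsilon$, viewed as a function of the cost matrix on $\mathcal{S}$. The reference plan $P^{\mathrm{SOT}}$, and hence $\mathcal{S}$, is held fixed.

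\textbf{Step 1: reduce to a function of the costs on $\mathcal{S}$.} With $P^{\mathrm{SOT}}$ fixed, write
\[
F(C)=\min_{P\in\Pi_\mathcal{S}}\langle C,P\rangle+\varepsilon\KL(P\|P^{\mathrm{SOT}}).
\]
Only the entries $C_{ij}$ with $(i,j)\in\mathcal{S}$ enter this problem, since any feasible $P$ with finite objective is supported on $\mathcal{S}$. The feasible set $\Pi_\mathcal{S}$ is compact and independent of $C$. For each $P$ the objective is affine in $C$, so $F$ is a minimum of affine functions and therefore concave.

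\textbf{Step 2: differentiability of $F$.} By Proposition~\ref{prop:unique}, for each $C$ the minimiser $P^\star(C)$ is unique; strict convexity of the KL term on $\Pi_\mathcal{S}$ gives the same conclusion. Danskin's theorem then says that $F$ is differentiable and
\[
\partial F/\partial C_{ij}=P^\star_{ij}\quad\text{for }(i,j)\in\mathcal{S}.
\]
We will check the hypotheses explicitly: compactness of $\Pi_\mathcal{S}$, joint continuity of the objective in $(C,P)$, and uniqueness of the minimiser. The objective is continuous on $\Pi_\mathcal{S}$ because $t\log t$ extends continuously to $t=0$ and $P^{\mathrm{SOT}}_{ij}>0$ on $\mathcal{S}$. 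An alternative route is the dual: $F=\langle\boldsymbol{\varphi},\mathbf a\rangle+\langle\boldsymbol{\psi},\mathbf b\rangle$ at the optimal potentials, and the first-order terms cancel by the optimality conditions. We will use the primal Danskin argument because it avoids the smoothness question for the potentials.

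\textbf{Step 3: chain rule.} Since $C_{kj}=\|x_k-y_j\|^2$ depends on $x_k$ only through row $k$, we have $\nabla_{x_k}C_{kj}=2(x_k-y_j)$. Therefore
\[
\nabla_{x_k}\mathrm{SLOT}_\varepsilon=\sum_{j\in\mathcal{S}_k}P^\star_{kj}\,2(x_k-y_j).
\]
The marginal constraint $\sum_{j\in\mathcal{S}_k}P^\star_{kj}=a_k$ turns this into $2a_kx_k-2\sum_jP^\star_{kj}y_j=2a_k\bigl(x_k-T_\varepsilon(x_k)\bigr)$. Stacking the rows gives the matrix form $2\diag(\mathbf a)X-2P^\star Y$, which is $2\diag(\mathbf a)\bigl(X-T_\varepsilon(X)\bigr)$.

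\textbf{Main obstacle.} The delicate point is Step 2, specifically the justification of Danskin's theorem at the boundary of $\Pi_\mathcal{S}$, where some $P_{ij}=0$ and the entropy is not differentiable. We avoid this because Danskin's theorem only requires differentiability of the objective in the parameter $C$, not in $P$. In addition, Proposition~\ref{prop:unique} places $P^\star$ in the relative interior of $\Pi_\mathcal{S}$, since $P^\star=\diag(\mathbf u)K_\varepsilon\diag(\mathbf v)$ with positive $\mathbf u,\mathbf v$ has full support on $\mathcal{S}$.
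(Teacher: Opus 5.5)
Your proposal is correct and follows essentially the same route as the paper: Danskin's theorem with the unique minimiser $P^\star$ (Proposition~\ref{prop:unique}) gives $\partial\,\mathrm{SLOT}_\varepsilon/\partial C_{kj}=P^\star_{kj}$ on $\mathcal{S}$. The paper then applies the chain rule through $C_{kj}=\|x_k-y_j\|^2$ and the row-marginal identity $\sum_{j}P^\star_{kj}=a_k$, exactly as you do. Your explicit check of Danskin's hypotheses (compactness of $\Pi_\mathcal{S}$, and continuity of the KL term on it because $P^{\mathrm{SOT}}_{ij}>0$ on $\mathcal{S}$) is a useful addition that the paper leaves implicit.
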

\noindent The proof is in Appendix~\ref{app:proof_gradient}.

\paragraph{Computational cost}
The sparse matrix-vector product $P^\star Y$ costs
$O(|\mathcal{S}|\,d)=O(L(N{+}M)\,d)$,
compared with $O(NMd)$ for the dense EOT gradient.
No sorting or projection is required at differentiation time.

\paragraph{Second-order Hessian and Hessian-vector products}
\label{sec:hessian}

Following the derivation of~\citet[Appendix~C]{ye2026flashsinkhorn}
and~\citet{li2025robust}, we derive the Hessian
$\nabla^2_X\mathrm{SLOT}_\varepsilon$ under stop-gradient in
Appendix~\ref{app:proof_hessian}. Since the analytic structure is
identical to EOT, the Hessian-vector product
$(\nabla^2_X\mathrm{SLOT}_\varepsilon)\,A$ can be computed via
sparse linear solves involving the sensitivity matrix
$H^\star\in\mathbb{R}^{(N+M)\times(N+M)}$, which inherits the
sparsity of~$P^\star$. Each Hessian-vector product therefore costs
$O(|\mathcal{S}|)$ per conjugate-gradient iteration, compared
with $O(NM)$ for dense EOT.

\section{Experiments}
\label{sec:experiments}
To assess the effectiveness of the proposed method, we evaluate
SinkSLOT both as a \emph{transport solver} and as a \emph{loss
function} via its divergence. We benchmark against
FlashSinkhorn~\citep{ye2026flashsinkhorn}, the most computationally
efficient GPU implementation of dense Sinkhorn available;
SROT~\citep{nguyen2026sliced}, which uses a smoothed sliced reference
coupling but operates on the full dense kernel; and a sparse Sinkhorn
approach, Spar-Sink~\citep{li2023importance}, which sparsifies the
Gibbs kernel by selecting entries via importance sampling.
Experiments are run on an NVIDIA H100-80GB. Implementation details and full results are  in
Appendix~\ref{app:implementation_details} and \ref{app:additional_results}.

\subsection{Synthetic Benchmarks}

\paragraph{Setup}
We consider five datasets with different geometric structures and dimensionalities, following previous experiments~\citep{nguyen2026sliced,ye2026flashsinkhorn}: (i)~\textbf{half-moons} ($d{=}2$), a pair of
interleaved crescent-shaped point clouds;
(ii)~\textbf{8-Gaussians} ($d{=}2$), a mixture of eight isotropic Gaussians; (iii)~\textbf{two-rings} ($d{=}2$), where $\mu$ and $\nu$ are supported on concentric circles with different radii 
(iv)~\textbf{Gaussian} $d{=}3$; and (v)~\textbf{Gaussian}
$d{=}64$. We sample $N=M=10^4$ points with weights drawn uniformly from $[0.1, 1.1]$, normalised to sum to one.
All methods use identical early stopping based on marginal violation (details in Appendix~\ref{app:benchmark_exp}). 
Accuracy is measured by the relative
transport cost gap (\%),
$\Delta:=100\times(\langle C, P \rangle - \langle C, P^{\mathrm{OT}} \rangle)\,/\,
\langle C, P^{\mathrm{OT}} \rangle$, where
$P^{\mathrm{OT}}$ is the exact OT plan from linear programming. 

\paragraph{SinkSLOT matches dense Sinkhorn accuracy at lower computational cost}

First, we assess whether sparsifying the Gibbs kernel with sliced lifted plans degrades plan quality.
We sweep over
regularisation strength $\varepsilon$ (all methods), slice count
$L$ (SROT and SinkSLOT), and subsampling budget $S_0$ (Spar-Sink). For each method, we record the minimum runtime required to reach a relative cost gap below three
thresholds: $\Delta \leq 1\%$, $5\%$, and $10\%$.
Table~\ref{tab:accuracy_speed} reports SinkSLOT's speedup over each competitor across all five datasets,
along with SinkSLOT's total runtime.

\input{Tables/speedup}

Three conclusions emerge. First, Spar-Sink fails to reach any gap
threshold on the Gaussian datasets ($d{=}3$; $d{=}64$) and cannot
attain the ${\leq}\,5\%$ gap on 8-Gaussians. Where Spar-Sink reaches the accuracy threshold, SinkSLOT is $2$--$45\times$ faster. Second, SROT is the slowest method overall: SinkSLOT achieves $15$--$160\times$ speedups across all datasets and
gap levels.
Third, and most importantly, SinkSLOT consistently outperforms
FlashSinkhorn by $3$ to $36\times$, with the largest gains on the low-dimensional benchmarks. Even on the
challenging $d{=}64$ Gaussian setting, SinkSLOT is
$3$--$11\times$ faster. These results show that SinkSLOT can reach the same accuracy as dense methods at a fraction of the computation time. While SinkSLOT primarly targets a reduction of the computation cost, results related to peak memory usage are provided in Appendix~\ref{app:memory_cost}.

\paragraph{Impact of the number of slices $L$}
The number of slices $L$ determines the accuracy--speed trade-off. Results in Appendix~\ref{app:pareto} show the Pareto front for
SinkSLOT at various $L$ alongside FlashSinkhorn. First, increasing $L$ improves the
achievable accuracy but increases computation time, consistent with
the $O(L(N+M))$ per-iteration cost. Second, for a given $L$,
decreasing $\varepsilon$ improves accuracy only up to a plateau that
reflects the gap between the support-restricted OT solution and the
true OT plan; only enriching the sparse support by increasing $L$ can lower this floor. Third, despite this trade-off,
SinkSLOT consistently reaches the same accuracy as FlashSinkhorn at
lower runtime for moderate gap levels. Very high precision ($\Delta \leq 1\%$) requires large $L$, but
such precision is rarely needed in practice due to the prohibitive
computational cost in this regime.

\paragraph{Scalability}
SinkSLOT's $O(L(N+M))$ per-iteration cost is asymptotically lower
than FlashSinkhorn's $O(NM)$ for fixed $L$.
Figure~\ref{fig:scalability} confirms this: SinkSLOT consistently
outperforms FlashSinkhorn across both $d{=}3$ and $d{=}64$, with speedups of $2$–$11\times$ at $L{=}4096$ and $4$–$36\times$ at $L{=}1024$. The advantage also persists across
dimensions (right panel): at fixed $N{=}10^4$, SinkSLOT remains faster
for all $d$ up to $1024$, even at $L{=}4096$.

\begin{figure*}[t]
\centering
\includegraphics[width=0.89\textwidth]{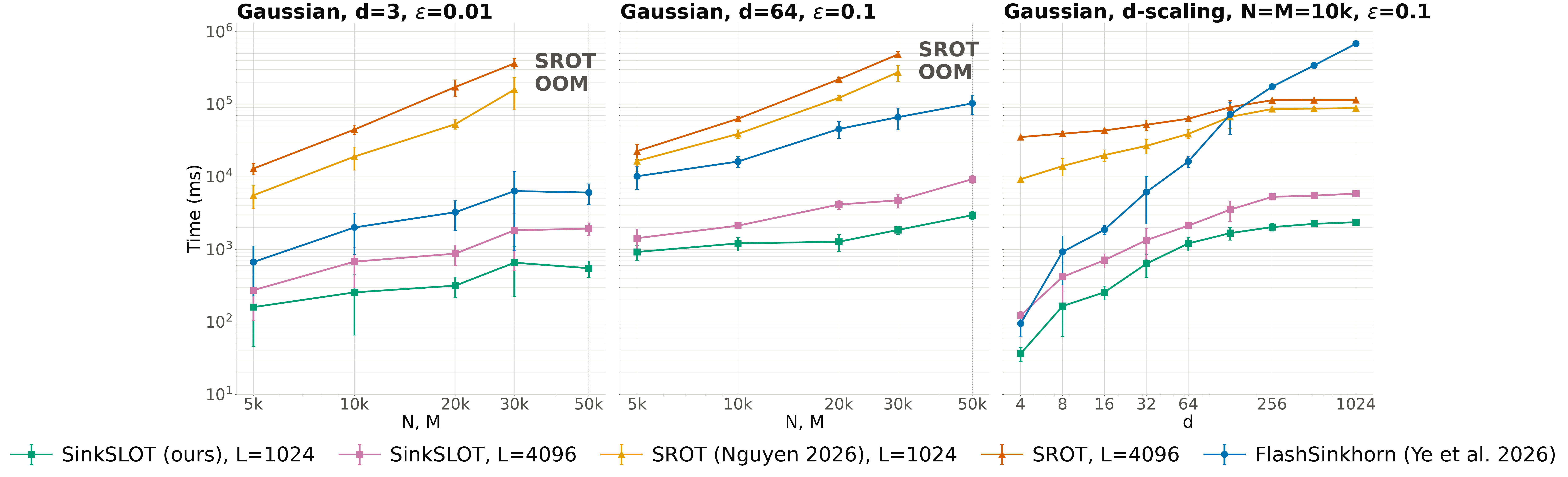}
\caption{Forward-pass runtime against $(N,M)$ and $d$. Error bars indicate the standard error over five random seeds.}
\label{fig:scalability}
\end{figure*}

\subsection{Gradient Flows}
 We
now assess whether the SinkSLOT divergence
$\mathrm{SLOT}_\varepsilon$~\eqref{eq:slot_val} is effective as a
loss function for optimisation. We run the blob-to-crescent gradient
flow experiment of EOT~\citep{feydy2019interpolating} and
SROT~\citep{nguyen2026sliced} with $N = M = 1000$, and additionally
report results for SOT~\citep{bonneel2015sliced} and
$\mathrm{SLOT}_\varepsilon$~\eqref{eq:slot_val}. At step~$t$, source points are updated:
$X^{(t+1)} = X^{(t)} - \eta\, n\, \nabla_X D(X^{(t)}, Y)$ with
$\eta = 0.05$. $D$~denotes either the \textit{Sliced Wasserstein Distance}~\citep{bonneel2015sliced} or the corresponding
divergence for EOT, SROT, and $\mathrm{SLOT}_\varepsilon$. All methods use the squared
Euclidean cost $C_{ij} = \|x_i - y_j\|^2$. For the divergences, we
set $\varepsilon = 0.01$. $L = 100$ projection vectors are sampled
whenever a sliced plan is involved (all except EOT),
and the SROT smoothing parameter~$\gamma$ in~\eqref{eq:srot} is set
to $10^{-8}$. We evaluate all methods using the exact squared
2-Wasserstein distance, $W_2^2$~\eqref{eq:ot}.

\begin{figure*}[t!]
\centering
\includegraphics[width=0.85\textwidth, trim=0 8 0 7, clip]{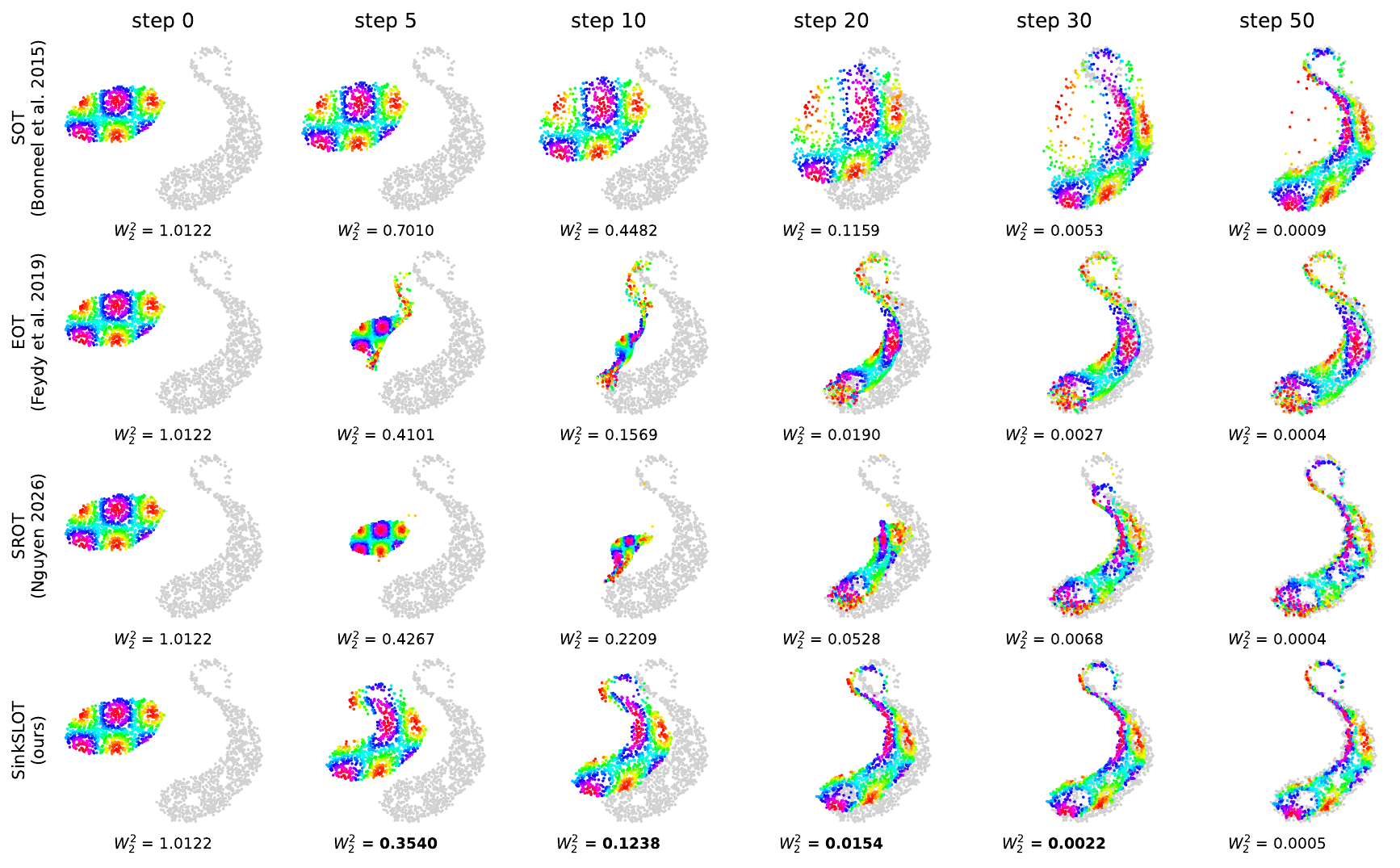}
\caption{Blob-to-crescent gradient flows. $N=M=1000$. Regularisation strength $\varepsilon=0.01$. $L=100$ slices. Performance is evaluated using the exact squared 2-Wasserstein distance $W_2^2$; \textbf{bold} indicates the lowest value at each intermediate step.}
\label{fig:grad_flow}
\end{figure*}

Figure~\ref{fig:grad_flow} shows the results after 50 Euler steps.
While all four methods attain similar final $W_2^2$ values, $\mathrm{SLOT}_\varepsilon$
transports the source points toward the target more rapidly during the
earlier steps. Quantitatively, $\mathrm{SLOT}_\varepsilon$ achieves the lowest $W_2^2$ and
lingers substantially below SROT through step~30. Qualitatively, at
step~5, $\mathrm{SLOT}_\varepsilon$ already rearranges the source cloud into the target
crescent shape, whereas the initial blob shape remains clearly visible
for the compared methods. The fact that $\mathrm{SLOT}_\varepsilon$ converges without
debiasing, while EOT requires the Sinkhorn divergence
correction~\citep{feydy2019interpolating}, is consistent with
Proposition~\ref{prop:divergence}. Overall, the informative gradients
and efficient forward computation make $\mathrm{SLOT}_\varepsilon$ divergence a promising
loss function for large-scale optimisation.

\section{Conclusion}
\label{sec:conclusion}
We introduced SinkSLOT, a sparse EOT method that replaces the dense
independent reference coupling with an expected lifted sliced OT
plan, reducing per-iteration cost from $O(NM)$ to $O(L(N+M))$. We
proved convergence of the sparse Sinkhorn iterations, characterised
limiting behaviour as $\varepsilon$ varies, and derived first- and
second-order derivatives. Experiments show that SinkSLOT achieves
accuracy comparable to dense and hardware-aware baselines while
substantially reducing computation time.

The number of slices $L$ is a tunable parameter: too few slices
restrict the transport polytope, while too many reduce the sparsity
advantage. Promising extensions include $\varepsilon$-scaling~\citep{kosowsky1994invisible},
non-uniform slice weighting~\citep{liu2025expected}, warm
starts~\citep{peyre2019computational}, and data-dependent projection
directions~\citep{nguyen2021distributional}.

Overall, experiments show that SinkSLOT delivers substantial speedups over state-of-the-art dense and sparse EOT methods
as a transport solver, and that the resulting divergence is a
promising loss function, as illustrated on a gradient flow task.

\section*{Acknowledgments}
This work was supported by ANR through the ``Investissements d’avenir'' program (ANR-10-IAIHU-06, ANR-19-P3IA-0001, PRAIRIE 3IA Institute), the ``France 2030'' program (ANR-23-IACL-0008, PRAIRIE-PSAI).  The ARAMIS Lab is affiliated with DIM C-BRAINS, funded by the Conseil Régional d’Ile-de-France. This work was performed using HPC resources from GENCI-IDRIS (Grant 2026-AD011017987). RD received Marie Skłodowska-Curie grant No. 101154248 (SafeREG). 
SSK acknowledges support from the UKRI MRC Doctoral Training Partnership at King's College London (MR/W006820/1). 
TV is co-founder and shareholder of Hypervision Surgical. 

\bibliographystyle{plainnat}
\bibliography{bibliography}

\clearpage
\appendix

\newcounter{myappendix}[section]
\renewcommand{\themyappendix}{\thesection.\arabic{myappendix}}

\newcommand{\appsubsection}[1]{%
  \refstepcounter{myappendix}%
  \subsection*{\themyappendix\quad #1}%
  \addcontentsline{toc}{subsection}{\themyappendix\quad #1}%
}

\newcounter{appsection}
\newcounter{appsubsection}[appsection]
\renewcommand{\theappsection}{\Alph{appsection}}
\renewcommand{\theappsubsection}{\theappsection.\arabic{appsubsection}}

\newcommand{\appsec}[1]{%
  \refstepcounter{appsection}%
  \section*{\theappsection.\quad #1}%
  \label{#1}%
}

\newcommand{\appsubsec}[1]{%
  \refstepcounter{appsubsection}%
  \subsection*{\theappsubsection\quad #1}%
}

\appsec{Proofs}

\appsubsec{Proof of Proposition~\ref{prop:unique}
  (unique diagonal scaling)}
\label{app:proof_unique}

\begin{proof}
Since $P^{\mathrm{SOT}}\in\Pi(\mathbf{a},\mathbf{b})$ is supported
on~$\mathcal{S}$, the feasible set
$\Pi(\mathbf{a},\mathbf{b})\cap
 \{P:\support(P)\subseteq\mathcal{S}\}$
is non-empty.
The KL divergence is strictly convex in its first argument over this
set, so the minimiser $P_\varepsilon^{\star}$ exists and is unique.

Writing the Lagrangian with multipliers $f_i$ (row marginals) and
$g_j$ (column marginals), the first-order condition for
$(i,j)\in\mathcal{S}$ gives
$\log\frac{P_{ij}}{(K_\varepsilon)_{ij}} - f_i - g_j = 0$,
hence $P_{ij} = e^{f_i}(K_\varepsilon)_{ij}\,e^{g_j}$.
Defining $u_i = e^{f_i}$ and $v_j = e^{g_j}$ yields the claimed
form; for $(i,j)\notin\mathcal{S}$, $P_{ij}^{\star}=0$ by the
support restriction.

We additionally verify scalability in the sense of
\citet{kalantari2008complexity}:
a nonnegative matrix $K$ is $(\mathbf{a},\mathbf{b})$-scalable if and
only if there exists a matrix with the same zero pattern satisfying
both marginal constraints.
Since $P^{\mathrm{SOT}}$ shares the zero pattern of $K_\varepsilon$
and lies in $\Pi(\mathbf{a},\mathbf{b})$, the condition is met.
\end{proof}

\appsubsec{Proof of Proposition~\ref{prop:sinkhorn}
  (convergence of sparse Sinkhorn)}
\label{app:proof_sinkhorn}

\begin{proof}
We apply \citet[Theorem~3.2]{csiszar1975divergence}.
Matrices $P\in\mathbb{R}_+^{N\times M}$ supported on $\mathcal{S}$
are identified with probability distributions on the finite set
$\mathcal{S}$,
and $K_\varepsilon$ plays the role of the reference measure~$R$ in
\citet{csiszar1975divergence}.
Define two linear constraint families
\[
  \mathcal{E}_r=\{P\ge0:P\mathbf{1}=\mathbf{a}\},\qquad
  \mathcal{E}_c=\{P\ge0:P^\top\mathbf{1}=\mathbf{b}\},
\]
so that $\Pi(\mathbf{a},\mathbf{b})=\mathcal{E}_r\cap\mathcal{E}_c$.
Each is a linear family in the sense of
\citet[Section~3, case~(A)]{csiszar1975divergence},
since the constraints are finite linear expectations.
The domain condition of \citet[Theorem~3.2]{csiszar1975divergence}
requires a distribution in
$\mathcal{E}_r\cap\mathcal{E}_c$ supported on~$\mathcal{S}$;
this is satisfied by~$P^{\mathrm{SOT}}$.

We identify each half-iteration of~\eqref{eq:sparse_sinkhorn}
with the I-projection (KL-projection) of the current iterate onto
$\mathcal{E}_r$ or~$\mathcal{E}_c$.
Starting from
$P^0=K_\varepsilon=\diag(\mathbf{1})\,K_\varepsilon\,\diag(\mathbf{1})$,
the I-projection of $P^\ell$ onto $\mathcal{E}_r$ preserves the
diagonal-scaling structure:
the first-order optimality conditions give
$\widetilde{P}^{\ell+1}_{ij}
 = u_i^{\ell+1}(K_\varepsilon)_{ij}v_j^\ell$,
and imposing $\sum_j\widetilde{P}^{\ell+1}_{ij}=a_i$ yields
$\mathbf{u}^{\ell+1}=\mathbf{a}\oslash(K_\varepsilon\mathbf{v}^\ell)$.
Similarly, projecting onto $\mathcal{E}_c$ gives
$\mathbf{v}^{\ell+1}
 =\mathbf{b}\oslash(K_\varepsilon^\top\mathbf{u}^{\ell+1})$.
By \citet[Theorem~3.2]{csiszar1975divergence},
the alternating I-projections converge to the I-projection of
$K_\varepsilon$ onto
$\mathcal{E}_r\cap\mathcal{E}_c=\Pi(\mathbf{a},\mathbf{b})$,
which is $P_\varepsilon^{\star}$.
\end{proof}

\appsubsec{Proof of Proposition~\ref{prop:divergence}
  ($\mathrm{SLOT}_\varepsilon$ is a divergence)}
\label{app:proof_divergence}

\begin{proof}\leavevmode
\paragraph{Non-negativity}
Since $C\ge0$ and $P_\varepsilon^{\star}\ge0$,
we have $\langle C,P_\varepsilon^{\star}\rangle\ge0$.
By Gibbs' inequality,
$\KL(P_\varepsilon^{\star}\|P^{\mathrm{SOT}})\ge0$.
Hence $\mathrm{SLOT}_\varepsilon(\mu,\nu)\ge0$.

\paragraph{Identity of indiscernibles}

$(\Leftarrow)$\;
When $\mu=\nu$, projecting identical point clouds onto any
direction~$\theta$ yields identical sorted positions, so the
rank-based 1D matching maps each point to itself.
Thus $P^{\mathrm{SOT}(\mu,\mu)}=\diag(\mathbf{a})$.
Since $C_{ii}=\|x_i-x_i\|^p=0$, the candidate plan
$P=\diag(\mathbf{a})$ achieves
$\langle C,\diag(\mathbf{a})\rangle=0$ and
$\KL\bigl(\diag(\mathbf{a})\|\diag(\mathbf{a})\bigr)=0$,
giving $\mathrm{SLOT}_\varepsilon(\mu,\mu)=0$.

$(\Rightarrow)$\;
If $\mathrm{SLOT}_\varepsilon(\mu,\nu)=0$, then both
$\langle C,P_\varepsilon^{\star}\rangle=0$ and
$\KL(P_\varepsilon^{\star}\|P^{\mathrm{SOT}})=0$.
The latter implies $P_\varepsilon^{\star}=P^{\mathrm{SOT}}$.
The former requires $C_{ij}=0$, hence $x_i=y_j$,
for all $(i,j)$ in the support of~$P^{\mathrm{SOT}}$.
Since $P^{\mathrm{SOT}}\in\Pi(\mathbf{a},\mathbf{b})$,
every source point $x_i$ sends mass only to target points
$y_j=x_i$, and every target point $y_j$ receives mass only from
source points $x_i=y_j$.
For any location $z\in\mathbb{R}^d$,
\begin{align*}
  \mu(\{z\})
  &= \sum_{i:x_i=z} a_i
   = \sum_{i:x_i=z}\sum_j P^{\mathrm{SOT}}_{ij}\\
  &= \sum_{i:x_i=z}\sum_{j:y_j=z} P^{\mathrm{SOT}}_{ij}
   = \sum_{j:y_j=z} b_j
   = \nu(\{z\}),
\end{align*}
so $\mu=\nu$.

\paragraph{Symmetry}
For any direction~$\theta$, the 1D rank-based optimal matching is
symmetric: swapping source and target reverses the matching
bijection.
Hence
$P^{\mathrm{SOT}(\nu,\mu)}
 =\bigl(P^{\mathrm{SOT}(\mu,\nu)}\bigr)^\top$.
Since $C_{ij}=\|x_i-y_j\|^p=\|y_j-x_i\|^p$,
the cost matrix for the reversed problem is~$C^\top$.
The Gibbs kernel for the reversed problem is therefore
$K_\varepsilon^\top$, and the optimal plan is
$(P_\varepsilon^{\star})^\top$.
Because KL is invariant under simultaneous transposition of both
arguments:
\begin{align*}
  \mathrm{SLOT}_\varepsilon(\nu,\mu)
  &= \langle C^\top,(P_\varepsilon^{\star})^\top\rangle
     +\varepsilon\,
      \KL\bigl(
        (P_\varepsilon^{\star})^\top
        \|(P^{\mathrm{SOT}})^\top
      \bigr)\\
  &= \mathrm{SLOT}_\varepsilon(\mu,\nu).
  \qedhere
\end{align*}
\end{proof}

\appsubsec{Proof of Proposition~\ref{prop:rate}
  (linear convergence rate)}
\label{app:proof_rate}

\begin{proof}
The argument combines the classical Jacobian analysis of Sinkhorn
iterates~\citep{soules1991rate,knight2008sinkhorn} with a connectivity
characterisation specific to the sparse support~$\mathcal{S}$.

\paragraph{Step 1: Jacobian and its spectrum}
Write one full Sinkhorn cycle as a map
$\Psi:\mathbb{R}^M\to\mathbb{R}^M$ on the log-scaling vectors
$\boldsymbol{\psi}=\log\mathbf{v}$
(cf.\ equations \eqref{eq:logrow}-\eqref{eq:logcol}).
A direct calculation using the softmax identity
$\partial_{x_k}\mathrm{LSE}(x)=e^{x_k}/\sum_l e^{x_l}$
shows that the Jacobian of $\Psi$ at the fixed point
$\boldsymbol{\psi}^*$ satisfies
\begin{equation}\label{eq:jacobian_entry}
  J_{jj'}
  =\sum_{i:\,(i,j),(i,j')\in\mathcal{S}}
   \frac{P^\star_{ij}\;P^\star_{ij'}}{a_i\;b_j}\,,
\end{equation}
i.e.\
$J = D_{\mathbf{b}}^{-1}(P_\varepsilon^\star)^\top
     D_{\mathbf{a}}^{-1}P_\varepsilon^\star$;
see \citet[Theorem~3.1]{knight2008sinkhorn} for the general
(possibly sparse) case.
The similarity transformation
$\widetilde{J}
 = D_{\mathbf{b}}^{1/2}\,J\,D_{\mathbf{b}}^{-1/2}
 = A^\top A$
shows that the eigenvalues of $J$ are the squared singular values
$\sigma_1^2\geq\sigma_2^2\geq\cdots$ of the normalised plan
matrix~$A$ defined in~\eqref{eq:normalised_plan}.

\paragraph{Step 2: Leading singular value}
A direct check confirms $\sigma_1=1$:
\begin{align*}
  A\,\sqrt{\mathbf{b}}
  &= D_{\mathbf{a}}^{-1/2}\,P_\varepsilon^\star\,\mathbf{1}
   = D_{\mathbf{a}}^{-1/2}\,\mathbf{a}
   = \sqrt{\mathbf{a}},\\
  A^\top\!\sqrt{\mathbf{a}}
  &= D_{\mathbf{b}}^{-1/2}\,(P_\varepsilon^\star)^\top\mathbf{1}
   = \sqrt{\mathbf{b}}.
\end{align*}
The corresponding eigenvector of $J$ is
$D_{\mathbf{b}}^{-1/2}\sqrt{\mathbf{b}}=\mathbf{1}$, which spans
the gauge direction
$\boldsymbol{\psi}\mapsto\boldsymbol{\psi}+c\,\mathbf{1}$.

\paragraph{Step 3: Connectivity implies $\sigma_2<1$}
The entry $(A^\top A)_{jj'}$ is positive if and only if columns $j$
and $j'$ share a row neighbour in~$G_\mathcal{S}$.
This adjacency defines the \emph{column intersection graph}~$H$.
When $G_\mathcal{S}$ is connected, so is~$H$: any path
$c_j\!-\!r_{i_1}\!-\!c_{j_1}\!-\!r_{i_2}\!-\!\cdots\!-\!c_{j'}$
in $G_\mathcal{S}$ maps to a walk
$j\!-\!j_1\!-\!\cdots\!-\!j'$ in~$H$,
since each consecutive column pair shares a row neighbour.
Hence $A^\top\!A$ is irreducible, and the Perron-Frobenius theorem
implies that its leading eigenvalue~$1$ is simple, giving
$\sigma_2^2<1$.
Conversely, if $G_\mathcal{S}$ has $q\geq2$ connected components,
$A$ decomposes into $q$ independent blocks each contributing a unit
singular value, so $\sigma_2=1$.

\paragraph{Step 4: Linear convergence of the coupling}
The log-domain map $\Psi$ satisfies the gauge equivariance
$\Psi(\boldsymbol{\psi}+c\mathbf{1})
 =\Psi(\boldsymbol{\psi})+c\mathbf{1}$,
so the eigenvalue $1$ of~$J$ corresponds to the gauge direction and
does not affect the coupling
$P^\ell=\diag(e^{\boldsymbol{\varphi}^\ell})\,K_\varepsilon\,
 \diag(e^{\boldsymbol{\psi}^\ell})$.
On the quotient space $\mathbb{R}^M/\mathrm{span}(\mathbf{1})$
the induced map has a unique fixed point whose Jacobian has spectral
radius $\sigma_2^2<1$.
By the standard convergence theorem for differentiable fixed-point
iterations~\citep[Theorem~4.1]{ortega2000iterative},
\[
  \limsup_{\ell\to\infty}\;
  \bigl\|P^\ell - P_\varepsilon^{\star}\bigr\|_1^{\,1/\ell}
  \;\leq\;\sigma_2^{\,2}.
\]
The iteration count~\eqref{eq:iter_count} and the per-iteration cost
$O(|\mathcal{S}|)=O(L(N{+}M))$ follow immediately.
\end{proof}

\appsubsec{Proof of Proposition~\ref{prop:limits}
  (limiting behaviour)}
\label{app:proof_limits}

\begin{proof}\leavevmode

\paragraph{Part 1: $\varepsilon\to0^+$}
We first note that $\KL(P\|P^{\mathrm{SOT}})$ is finite and
continuous on $\Pi_\mathcal{S}$:
for any $(i,j)\in\mathcal{S}$ we have $P^{\mathrm{SOT}}_{ij}>0$,
and the map $x\mapsto x\log(x/q)$ is continuous on $[0,\infty)$
for $q>0$ (with the convention $0\log0=0$).
Since $\Pi_\mathcal{S}$ is compact (a closed bounded subset of
$\mathbb{R}^{|\mathcal{S}|}$), the KL divergence is bounded:
$\sup_{P\in\Pi_\mathcal{S}}\KL(P\|P^{\mathrm{SOT}})
 =:K_{\max}<\infty$.

Let $\varepsilon_k\to0^+$ and write $P_k=P_{\varepsilon_k}^\star$.
By compactness, $(P_k)$ has a convergent subsequence (still
denoted~$P_k$) with $P_k\to\bar{P}\in\Pi_\mathcal{S}$.

\emph{$\bar{P}$ is a cost minimiser.}\;
The optimality of~$P_k$ yields, for any $P\in\Pi_\mathcal{S}$:
\begin{equation}\label{eq:opt_eps_app}
  \langle C,P_k\rangle
  +\varepsilon_k\,\KL(P_k\|P^{\mathrm{SOT}})
  \;\leq\;
  \langle C,P\rangle
  +\varepsilon_k\,\KL(P\|P^{\mathrm{SOT}}).
\end{equation}
Since $\KL(P_k\|P^{\mathrm{SOT}})\geq0$:
\[
  \langle C,P_k\rangle\;\leq\;\langle C,P\rangle+\varepsilon_k K_{\max}.
\]
Taking $k\to\infty$:
$\langle C,\bar{P}\rangle\leq\langle C,P\rangle$
for all $P\in\Pi_\mathcal{S}$.
Hence $\bar{P}\in\mathcal{P}^*
:=\argmin_{P\in\Pi_\mathcal{S}}\langle C,P\rangle$.

\emph{$\bar{P}$ has minimum KL among cost minimisers.}\;
Let $\mathrm{OT}_\mathcal{S}=\langle C,\bar{P}\rangle$ and let
$\widehat{P}=\argmin\{
  \KL(P\|P^{\mathrm{SOT}}):P\in\mathcal{P}^*\}$
(unique by strict convexity of KL and convexity of~$\mathcal{P}^*$).
Setting $P=\widehat{P}$ in~\eqref{eq:opt_eps_app} and using
$\langle C,P_k\rangle\geq\mathrm{OT}_\mathcal{S}$:
\begin{align}
    \KL(P_k\|P^{\mathrm{SOT}})
  \;&\leq\;
  \KL(\widehat{P}\|P^{\mathrm{SOT}})
  +\frac{\langle C,\widehat{P}\rangle-\langle C,P_k\rangle}
        {\varepsilon_k}
  \\
  &\leq\;
  \KL(\widehat{P}\|P^{\mathrm{SOT}}),
\end{align}
where the last inequality uses
$\langle C,P_k\rangle\geq\langle C,\widehat{P}\rangle
 =\mathrm{OT}_\mathcal{S}$.
By continuity of KL on $\Pi_\mathcal{S}$:
\[
  \KL(\bar{P}\|P^{\mathrm{SOT}})
  \leq\liminf_k\,\KL(P_k\|P^{\mathrm{SOT}})
  \leq\KL(\widehat{P}\|P^{\mathrm{SOT}}).
\]
Since $\bar{P}\in\mathcal{P}^*$ and $\widehat{P}$ minimises KL
over~$\mathcal{P}^*$, we also have
$\KL(\bar{P}\|P^{\mathrm{SOT}})
 \geq\KL(\widehat{P}\|P^{\mathrm{SOT}})$.
Hence equality holds, and by the uniqueness of the KL minimiser
over the convex set~$\mathcal{P}^*$,
$\bar{P}=\widehat{P}$.
Since every convergent subsequence has the same limit, the entire
family converges:
$P_\varepsilon^\star\to\widehat{P}$ as $\varepsilon\to0^+$.

\paragraph{Part 2: $\varepsilon\to+\infty$}
Dividing the optimality condition~\eqref{eq:opt_eps_app}
by~$\varepsilon$ and setting $P=P^{\mathrm{SOT}}$
(for which $\KL(P^{\mathrm{SOT}}\|P^{\mathrm{SOT}})=0$):
\[
  \frac{\langle C,P_\varepsilon^\star\rangle}{\varepsilon}
  +\KL(P_\varepsilon^\star\|P^{\mathrm{SOT}})
  \;\leq\;
  \frac{\langle C,P^{\mathrm{SOT}}\rangle}{\varepsilon}.
\]
Since $\langle C,P_\varepsilon^\star\rangle\geq0$:
\[
  \KL(P_\varepsilon^\star\|P^{\mathrm{SOT}})
  \;\leq\;
  \frac{\langle C,P^{\mathrm{SOT}}\rangle}{\varepsilon}.
\]
By Pinsker's inequality,
$\|P-Q\|_1\leq\sqrt{2\,\KL(P\|Q)}$, so
\[
  \|P_\varepsilon^\star-P^{\mathrm{SOT}}\|_1
  \;\leq\;
  \sqrt{\frac{2\,\langle C,P^{\mathrm{SOT}}\rangle}{\varepsilon}}
  \;\xrightarrow{\;\varepsilon\to\infty\;}\;0.
  \qedhere
\]
\end{proof}

\appsubsec{Proof of Proposition~\ref{prop:gradient}
  (gradient of $\mathrm{SLOT}_\varepsilon$)}
\label{app:proof_gradient}

\begin{proof}
By Danskin's theorem~\citep{bertsekas1997nonlinear},
since $P^\star$ is the unique minimiser of a strictly convex
programme in~$P$ (Proposition~\ref{prop:unique}),
\[
  \frac{d\,\mathrm{SLOT}_\varepsilon}{dx_{k,t}}
  = \sum_j P^\star_{kj}\,\frac{\partial C_{kj}}{\partial x_{k,t}}
  = \sum_{j\in\mathcal{S}_k}P^\star_{kj}\cdot 2(x_{k,t}-y_{j,t}),
\]
where the sum is restricted to $\mathcal{S}_k$ because
$P^\star_{kj}=0$ for $(k,j)\notin\mathcal{S}$.
Using $\sum_j P^\star_{kj}=a_k$ yields the barycentric
form~\eqref{eq:grad_slot}.
\end{proof}

\appsubsec{Proof of Proposition~\ref{prop:hessian}
  (Hessian of $\mathrm{SLOT}_\varepsilon$)}

\begin{definition}[Sensitivity matrix]\label{def:sensitivity}
Define the \emph{sensitivity matrix}
\begin{equation}\label{eq:sensitivity}
  H^\star
  =\begin{pmatrix}
     \diag(\mathbf{a}) & P^\star\\
     (P^\star)^\top     & \diag(\mathbf{b})
   \end{pmatrix}
  \in\mathbb{R}^{(N+M)\times(N+M)}.
\end{equation}
Under assumption that the bipartite support graph $G_\mathcal{S}$ is connected,
$H^\star$ is symmetric positive semidefinite with a simple zero
eigenvalue (null vector $(\mathbf{1}_N,-\mathbf{1}_M)$),
and its Moore-Penrose pseudoinverse $H^{\star\dagger}$ is
well-defined.
\end{definition}

\begin{remark}\label{rem:sparse_sensitivity}
In the dense EOT case, $H^\star$ is a dense
$(N{+}M)\times(N{+}M)$ matrix because $P^\star>0$ everywhere.
In SinkSLOT, $H^\star$ inherits the sparsity of~$P^\star$:
its off-diagonal blocks have $|\mathcal{S}|$ nonzeros,
so the matrix-vector product $H^\star w$ costs
$O(|\mathcal{S}|+N+M)$ rather than $O(NM)$.
\end{remark}

\begin{definition}[Auxiliary tensors]\label{def:aux_tensors}
Define $\mathcal{B}\in\mathbb{R}^{N\times d\times M}$ by
$\mathcal{B}_{ktj}=2(x_{k,t}-y_{j,t})\,P^\star_{kj}$,
and $\mathcal{R}\in\mathbb{R}^{(N+M)\times N\times d}$ slice-wise
(for each $t\in[d]$) as
\[
  \mathcal{R}_{:,:,t}
  =\begin{pmatrix}
     \diag\!\bigl(\mathcal{B}_{:,t,:}\,\mathbf{1}_M\bigr)\\[2pt]
     (\mathcal{B}_{:,t,:})^\top
   \end{pmatrix}.
\]
\end{definition}

\begin{proposition}[Hessian of SinkSLOT]\label{prop:hessian}
Under the stop-gradient convention and squared Euclidean cost,
the Hessian
$\mathcal{T}=\nabla^2_X\mathrm{SLOT}_\varepsilon
 \in\mathbb{R}^{N\times d\times N\times d}$
decomposes as
\begin{equation}\label{eq:hessian_decomp}
  \mathcal{T}_{ktsl}
  =\frac{1}{\varepsilon}\sum_{i,j=1}^{N+M}
   \mathcal{R}_{ikt}\,H^{\star\dagger}_{ij}\,\mathcal{R}_{jsl}
  \;+\;\mathcal{E}_{ktsl},
\end{equation}
where $\mathcal{E}$ is block-diagonal across source points:
$\mathcal{E}_{k,:,s,:}=0$ for $k\neq s$, and
\begin{equation}\label{eq:explicit_term}
  \mathcal{E}_{k,:,k,:}
  = 2a_k\,\mathbb{I}_d
    -\frac{4}{\varepsilon}\sum_{j\in\mathcal{S}_k}
     P^\star_{kj}\,(x_k-y_j)(x_k-y_j)^\top,
\end{equation}
where each point $x_k, y_j\in\mathbb{R}^{d\times 1}$ is treated
as a column vector, so
$(x_k-y_j)(x_k-y_j)^\top\in\mathbb{R}^{d\times d}$ is the
outer product.
\end{proposition}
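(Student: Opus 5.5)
We would prove the Hessian formula by implicit differentiation of the dual potentials, exactly as in the dense EOT derivation of \citet[Appendix~C]{ye2026flashsinkhorn}, restricting every sum to $\mathcal{S}$. The starting point is Proposition~\ref{prop:gradient}: under stop-gradient, $\partial_{x_{k,t}}\mathrm{SLOT}_\varepsilon=\sum_{j\in\mathcal{S}_k}2(x_{k,t}-y_{j,t})P^\star_{kj}=\sum_j\mathcal{B}_{ktj}$. The Hessian is obtained by differentiating this expression once more with respect to $x_{s,l}$. The product rule splits it into two parts.
\begin{enumerate*}
\item The derivative of the factor $2(x_{k,t}-y_{j,t})$ gives $2\delta_{ks}\delta_{tl}\sum_jP^\star_{kj}=2a_k\delta_{ks}\delta_{tl}$, which is the identity part of $\mathcal{E}$.
\item The derivative of $P^\star_{kj}$ gives the remaining terms.
\end{enumerate*}

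To differentiate $P^\star$, we use the scaling form of Proposition~\ref{prop:unique} in log-domain:
\[
P^\star_{ij}=P^{\mathrm{SOT}}_{ij}\exp\bigl((f_i+g_j-C_{ij})/\varepsilon\bigr),\qquad (i,j)\in\mathcal{S},
\]
with $f=\varepsilon\boldsymbol{\varphi}$ and $g=\varepsilon\boldsymbol{\psi}$. Since $P^{\mathrm{SOT}}$ is fixed,
\[
dP^\star_{ij}=\tfrac{1}{\varepsilon}P^\star_{ij}\,(df_i+dg_j-dC_{ij}),\qquad \partial C_{ij}/\partial x_{s,l}=\delta_{is}\,2(x_{s,l}-y_{j,l}).
\]
The $-dC$ piece contributes
\[
-\tfrac{1}{\varepsilon}\delta_{ks}\sum_j 2(x_{k,t}-y_{j,t})\,P^\star_{kj}\,2(x_{k,l}-y_{j,l}).
\]
This is precisely the outer-product term $-\tfrac{4}{\varepsilon}\sum_{j\in\mathcal{S}_k}P^\star_{kj}(x_k-y_j)(x_k-y_j)^\top$ of $\mathcal{E}$, and it is block-diagonal across source points.

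The $df,dg$ piece contributes
\[
\tfrac{1}{\varepsilon}\sum_j\mathcal{B}_{ktj}\,(df_k+dg_j).
\]
Reading off Definition~\ref{def:aux_tensors}, this equals $\tfrac{1}{\varepsilon}\sum_i\mathcal{R}_{ikt}\,w_i$, where $w=(\partial f/\partial x_{s,l},\,\partial g/\partial x_{s,l})\in\mathbb{R}^{N+M}$. To obtain $w$, we differentiate the marginal constraints $\sum_jP^\star_{ij}=a_i$ and $\sum_iP^\star_{ij}=b_j$, which do not depend on $X$. This gives
\[
\sum_jP^\star_{ij}(df_i+dg_j)=\sum_jP^\star_{ij}\,dC_{ij},\qquad \sum_iP^\star_{ij}(df_i+dg_j)=\sum_iP^\star_{ij}\,dC_{ij}.
\]
The left-hand side is $H^\star w$ with $H^\star$ from Definition~\ref{def:sensitivity}. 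The right-hand side, for the direction $x_{s,l}$, is exactly the column $\mathcal{R}_{:,s,l}$: the top block is $\delta_{is}\sum_j\mathcal{B}_{slj}$ and the bottom block is $\mathcal{B}_{slj}$. Hence $w=H^{\star\dagger}\mathcal{R}_{:,s,l}+c\,(\mathbf{1}_N,-\mathbf{1}_M)$. The gauge component is annihilated when contracted with $\mathcal{R}_{:,k,t}$, because $\sum_i\mathcal{R}_{ikt}\cdot(\pm1)$ sums the top block minus the bottom block, giving $\sum_j\mathcal{B}_{ktj}-\sum_j\mathcal{B}_{ktj}=0$. Substituting yields the first term of \eqref{eq:hessian_decomp}.

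The main obstacle is justifying the differentiation of the potentials. We would invoke the implicit function theorem on the map $(f,g,X)\mapsto$ marginal residuals, restricted to the quotient by the gauge direction. This requires $H^\star$ to be invertible on $(\mathbf{1}_N,-\mathbf{1}_M)^\perp$, and that the right-hand side $\mathcal{R}_{:,s,l}$ lies in the range of $H^\star$.
\begin{enumerate*}
\item \emph{Invertibility.} We would show that $H^\star$ is positive semidefinite via $w^\top H^\star w=\sum_{(i,j)\in\mathcal{S}}P^\star_{ij}(w_i+w_{N+j})^2$, using $\sum_jP^\star_{ij}=a_i$ and $\sum_iP^\star_{ij}=b_j$. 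The kernel consists of vectors with $w_i=-w_{N+j}$ on every edge of $G_\mathcal{S}$, since $P^\star_{ij}>0$ on $\mathcal{S}$ by the positive scaling of Proposition~\ref{prop:unique}. Under connectivity of $G_\mathcal{S}$ (the same hypothesis as in Proposition~\ref{prop:spectral_gap}), this kernel is one-dimensional.
\item \emph{Range condition.} This follows from the orthogonality of $\mathcal{R}_{:,s,l}$ to the null vector computed above, so the pseudoinverse gives a genuine solution.
\end{enumerate*}
Finally, we would note that $P^{\mathrm{SOT}}$ and $\mathcal{S}$ are held fixed, so the map is smooth in $X$ and no further terms arise.
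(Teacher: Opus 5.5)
Your proposal is correct and takes the same route as the paper. The key observation is that $\log P^\star_{ij}=\log P^{\mathrm{SOT}}_{ij}+(f_i+g_j-C_{ij})/\varepsilon$ with $P^{\mathrm{SOT}}$ frozen, so differentiating the marginal constraints implicitly proceeds exactly as in dense EOT. The paper stops at that observation and defers the rest to \citet[Appendix~C]{ye2026flashsinkhorn}; you carry the computation out explicitly, including the gauge annihilation, the quadratic form $\sum_{(i,j)\in\mathcal{S}}P^\star_{ij}(w_i+w_{N+j})^2$ and the range condition under connectivity of $G_\mathcal{S}$, which fills in the paper's sketch rather than changing the argument.
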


\begin{proof}\label{app:proof_hessian}
Under the stop-gradient convention,
$\partial P^\star_{ij}/\partial x_{k,t}
 = (P^\star_{ij}/\varepsilon)
   (\partial f_i/\partial x_{k,t}
   +\partial g_j/\partial x_{k,t}
   -\partial C_{ij}/\partial x_{k,t})$,
since the only $x$-dependence in
$\log P^\star_{ij}
 =\log P^{\mathrm{SOT}}_{ij}+(f_i+g_j-C_{ij})/\varepsilon$
comes from $C_{ij}$ and the dual potentials $(f,g)$
(with $P^{\mathrm{SOT}}$ treated as constant).
This is the same structure as the dense EOT case
(where $\log P^\star_{ij}
 =\log a_i+\log b_j+(f_i+g_j-C_{ij})/\varepsilon$
and $\log a_i,\log b_j$ are constant),
so the remainder of the derivation (differentiating the marginal
constraints, solving the linear system via $H^{\star\dagger}$,
and collecting terms) is identical to
\citet[Theorem~7 / Appendix~C]{ye2026flashsinkhorn}.
\end{proof}

\appsec{Implementation details}\label{app:implementation_details}
\appsubsec{Synthetic Benchmark Experiments}
\label{app:benchmark_exp}

\paragraph{Shared protocol}
All methods solve the same problem instances and are held to the same
stopping rule, so runtimes are directly comparable.

Except in the scalability experiment in Figure~\ref{fig:scalability}, each dataset contains $N=M=10^4$ points. The weights are drawn uniformly from $[0.1,1.1]$ and normalised to sum to one. The three 2D datasets are visualised in Figure~\ref{fig:data_visualisation}.

\begin{figure*}
    \centering
\includegraphics[width=0.89\textwidth]{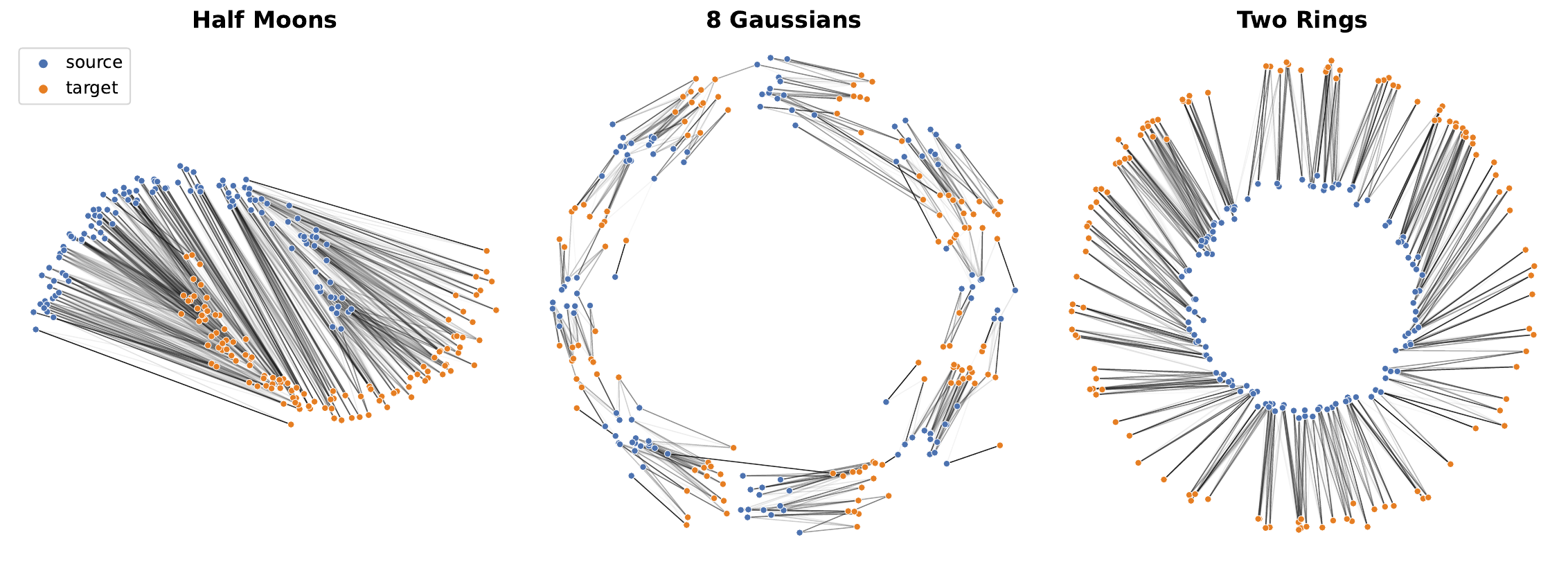}
\caption{Visualisation of SinkSLOT transport plan for the three 2D datasets. Only $120$ of the $10^4$ points are shown for readability. Regularisation strength $\varepsilon=0.01$. $L=1024$ slices.}
\label{fig:data_visualisation}
\end{figure*}

All methods are run in \texttt{float32} on the same NVIDIA H100-80GB GPU. Iterations terminate when the maximum marginal violation satisfies
\[
\max\Bigl(
\max_i \bigl|(P\mathbf{1})_i-a_i\bigr|,
\max_j \bigl|(P^\top\mathbf{1})_j-b_j\bigr|
\Bigr) < 10^{-6},
\]
or when the common limit of $10^4$ Sinkhorn iterations is reached. This limit is increased to $2\times10^4$ for the Gaussian dataset with $d=64$ and for the scalability experiment.

Every output plan is subject to a feasibility check, including two criteria: (i) mass conservation, $|\sum_{ij}P_{ij}-1|<10^{-3}$ and (ii) a marginal violation below $10^{-4}$. All $840$ output plans produced in the benchmark satisfy both criteria.

\paragraph{Hyperparameters}
We sweep eight values of the entropic regularisation parameter $\varepsilon$ for every method, eight slice counts $L$ for SROT and SinkSLOT, and four sampling budgets $s$ for Spar-Sink.

Since the effective entropic smoothing is governed by the ratio $C_{ij}/\varepsilon$, $\varepsilon$ should be selected relative to the scale of the ground cost~\citep{peyre2019computational}. We therefore sample eight log-spaced values in $[0.001,0.1]$ for the low-dimensional datasets and in $[0.1,1]$ for the Gaussian dataset with $d=64$.

For the slice-based methods, we use $L\in\{32,64,128,\ldots,4096\}$ on the low-dimensional datasets and $2L$ on Gaussian $d=64$. For Spar-Sink, we follow the experimental set of \citet{li2023importance} and use $s\in\{5,10,15,20\}\times s_0(N)$, where $s_0(N)=10^{-3}N\log^4(N)$. For $N=M=10^4$, these values correspond to expected sampling densities $s/N^2$ of approximately ${0.36\%,0.72\%,1.08\%,1.44\%}$.

For each method and cost-gap threshold, Table~\ref{tab:accuracy_speed} reports the minimum runtime achieved across the hyperparameter combinations considered. The reference plan $P^{\mathrm{OT}}$ is the exact solution of the
unregularised linear programme.

\paragraph{Baseline implementation}
SinkSLOT's implementation is described in Appendix~\ref{app:cuda}. For the baselines, we start from the authors'
released code and adapt it to our benchmarking stack.
The released implementations target a different software stack and
report different stopping and timing conventions, so running them as
published would confound the method with its implementation.
Adapting both holds the GPU, precision, stopping rule and timing
instrumentation fixed across all methods, leaving the algorithm as the
only difference.
The changes are confined to that harness: the update equations, the
reference coupling and the sampling scheme are the authors' own, and we
link the original code in each case.

\paragraph{FlashSinkhorn}
We use the authors' official FlashSinkhorn implementation\footnote{\url{https://github.com/ot-triton-lab/flash-sinkhorn}}.
Its solver is left intact apart from the instrumentation required to apply the shared marginal-violation stopping criterion instead of its native potential-change criterion. We use the alternating backend, rather than the symmetric backend, which updates both dual potentials simultaneously using half-steps. We set \texttt{allow\_tf32=False} to ensure that computations are performed in \texttt{float32}, consistent with all other methods.

\paragraph{SROT}
Adapted from the author's code~\citep{nguyen2026sliced}\footnote{\url{https://github.com/khainb/SROT}}.
The reference plan is the smoothed sliced plan
$P^{\mathrm{SOT}}_\gamma=(1-\gamma)P^{\mathrm{SOT}}+\gamma\,\mathbf{a}\otimes\mathbf{b}$
of~\eqref{eq:srot}. Consistent with the author's choice, we set $\gamma=10^{-8}$.

\paragraph{Spar-Sink}
Adapted from the authors' code~\citep{li2023importance}\footnote{\url{https://github.com/Mengyu8042/Spar-Sink}}.
Following their equations~(7) and~(9), entries are kept independently
with inclusion probability $q_{ij}=\min(1,\,s\,p_{ij})$ where
$p_{ij}\propto\sqrt{a_ib_j}$, and survivors are rescaled by $1/q_{ij}$ so
that the sampled kernel is an unbiased estimator for $K$.
The budget $s$ bounds the \emph{expected} number of nonzero entries, so the
realised count varies across runs. The values of $s$ included in the sweep are specified above.

\appsubsec{Performance Improvements from Fusion Strategies}
\label{app:cuda}

Table~\ref{tab:complexity} shows that the number of arithmetic operations per Sinkhorn iteration scales as $O(|\mathcal{S}|)$. In practice, however, wall-clock time scales as $|\mathcal{S}|$ times a constant factor that depends on implementation details, and a naive implementation can inflate that constant well beyond what the asymptotic bound suggests. For this reason, we wrote our kernels in Triton, taking inspiration from FlashSinkhorn's approach.
They read the support $\mathcal{S}$ through a compressed sparse layout we
build ourselves, rather than through a general-purpose sparse library.
The reason is that the row and column half-steps are segmented
log-sum-exp reductions, not matrix-vector products.
No sparse Basic Linear Algebra Subprograms (BLAS) library exposes that
primitive.
Two fusions carry the implementation, one in each stage of the pipeline.

\paragraph{Fusing the sparse cost evaluation}
Stage~1 needs $C_{ij}$ only on $\mathcal{S}$.
Written directly,
$(X[\texttt{rows}]-Y[\texttt{cols}])^2.\mathrm{sum}(1)$
allocates three $(|\mathcal{S}|,d)$ intermediates before reducing them.
We fuse the gather, difference, square and reduction into one Triton pass
that accumulates over $d$ in registers and writes only the $|\mathcal{S}|$
costs, evaluating $26$M entries in 3.1~ms and matching the dense
computation to a relative error of $2.8\times10^{-7}$.

\paragraph{Fusing the segmented log-sum-exp}
Each half-step of~\eqref{eq:logrow}-\eqref{eq:logcol} is a segmented LSE
over $\mathcal{S}$.
We store the row half-step in CSR and the column half-step in CSC so both
reductions are contiguous, and fuse each into a kernel that sweeps a row
in tiles of width $B$, carrying a running maximum $m$ and accumulator $s$. This is the streaming log-sum-exp recurrence of~\citet{nowozin2016streaming}, in the
generalised, parallel-reduction form of~\citet{milakov2018online}:
\[
  m'=\max\bigl(m,\textstyle\max_t x_t\bigr),
  \qquad
  s'=s\,e^{m-m'}+\textstyle\sum_t e^{x_t-m'},
\]
where $x_t$ enumerates $\log(K_\varepsilon)_{ij}+\psi_j$ over the tile's entries
$j$.
This never materialises those values and reads each entry once per
half-step rather than twice~\citep{ye2026flashsinkhorn}.
This is the same online-softmax trick FlashAttention~\citep{dao2022flashattention}
uses to fuse attention without materialising the score matrix, and that
FlashSinkhorn~\citep{ye2026flashsinkhorn} uses for its own dense Sinkhorn
kernels; we apply it to the segmented, per-row case the sparse support requires.
Folding $\log a_i-\mathrm{LSE}$ into the same kernel halves the launches
per iteration, from four to two.
One program owns one row and the tile width is set to
$B=2^{\lceil\log_2(|\mathcal{S}|/N)\rceil}$, the mean row length rounded
up to a power of two and clamped to $[32,1024]$.

\appsubsec{Gradient Flows}
\label{app:grad_flow}

We follow the convention of~\citet{nguyen2026sliced,feydy2019interpolating} as closely as possible. $N=M=1000$ uniformly weighted source and target points are sampled from two density images (blob $\rightarrow$ crescent), using density-weighted pixel sampling with sub-pixel jitter, $\pm (0.5/h)\times\text{noise}$, where $h$ is the image height in pixels, to avoid samples lying exactly on the discrete pixel grid. We then perform $50$ Euler steps, as described in the main text. The four methods are evaluated using the exact squared $2$-Wasserstein distance $W_2^2$:
\begin{itemize}
    \item \textbf{SOT:} We compute the exact analytical gradient using automatic differentiation through \texttt{ot.\allowbreak sliced\_\allowbreak wasserstein\_\allowbreak distance} from the POT library~\citep{flamary2021pot}, using $L=100$ slices. Since 1D OT has a closed-form solution based on sorting, no Sinkhorn iterations or entropic approximation are involved, and the results are therefore independent of $\varepsilon$.

    \item \textbf{EOT:} Rather than applying the marginal-violation stopping criterion used in the benchmark experiments, we run dense Sinkhorn for a fixed $1000$ iterations and use full backpropagation to compute the gradients. We set $\varepsilon=0.01$.

    \item \textbf{SROT:} In addition to the EOT setup, we use $L=100$ slices and set the smoothing parameter $\gamma$ in~\eqref{eq:srot} to $10^{-8}$, as proposed in~\citet{nguyen2026sliced}. As in our approach, the authors' implementation treats the prior coupling as a stop-gradient constant.

    \item \textbf{SinkSLOT (ours):} No backpropagation is required; analytical gradients are computed using Equation~\eqref{eq:grad_slot}. We run $1000$ fixed iterations with $\varepsilon=0.01$ and $L=100$.
\end{itemize}

In Appendix~\ref{app:stop_gradient}, we validate the substitution of our analytical gradient formula~\eqref{eq:grad_slot} for full backpropagation through $P^{\mathrm{SOT}}$, in which the prior coupling is treated as differentiable with respect to the source points.

\appsec{Additional results}\label{app:additional_results}

\begin{figure*}[t]
\centering
\includegraphics[width=0.89\textwidth]{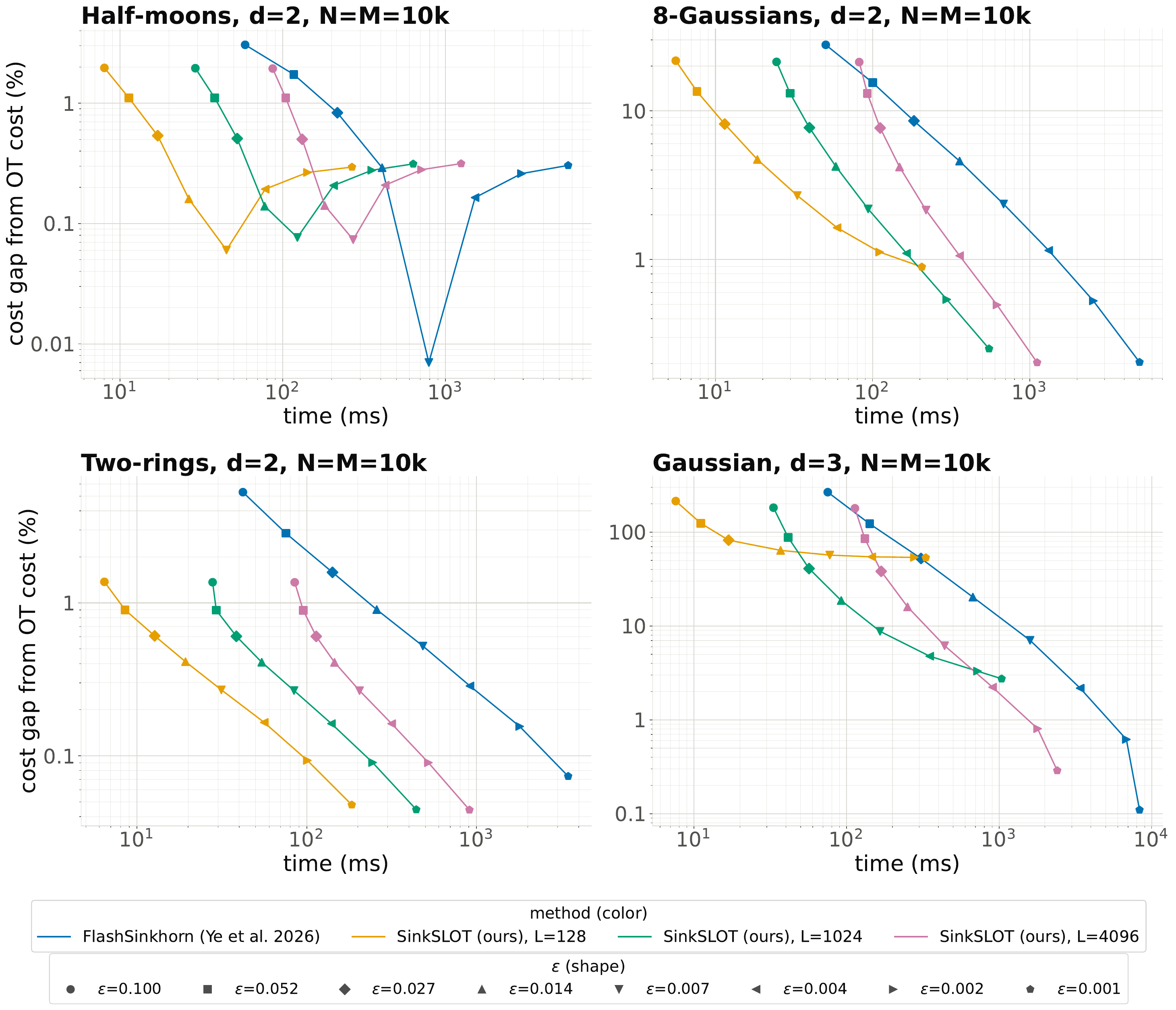}
\caption{Pareto fronts for the accuracy-speed trade-off on the four low-dimensional datasets. FlashSinkhorn vs SinkSLOT at three slice counts: $L=128$, $1024$, and $4096$. $N=M=10^4$.}
\label{fig:pareto}
\end{figure*}

\begin{figure}[t]
\centering
\includegraphics[width=0.45\textwidth]{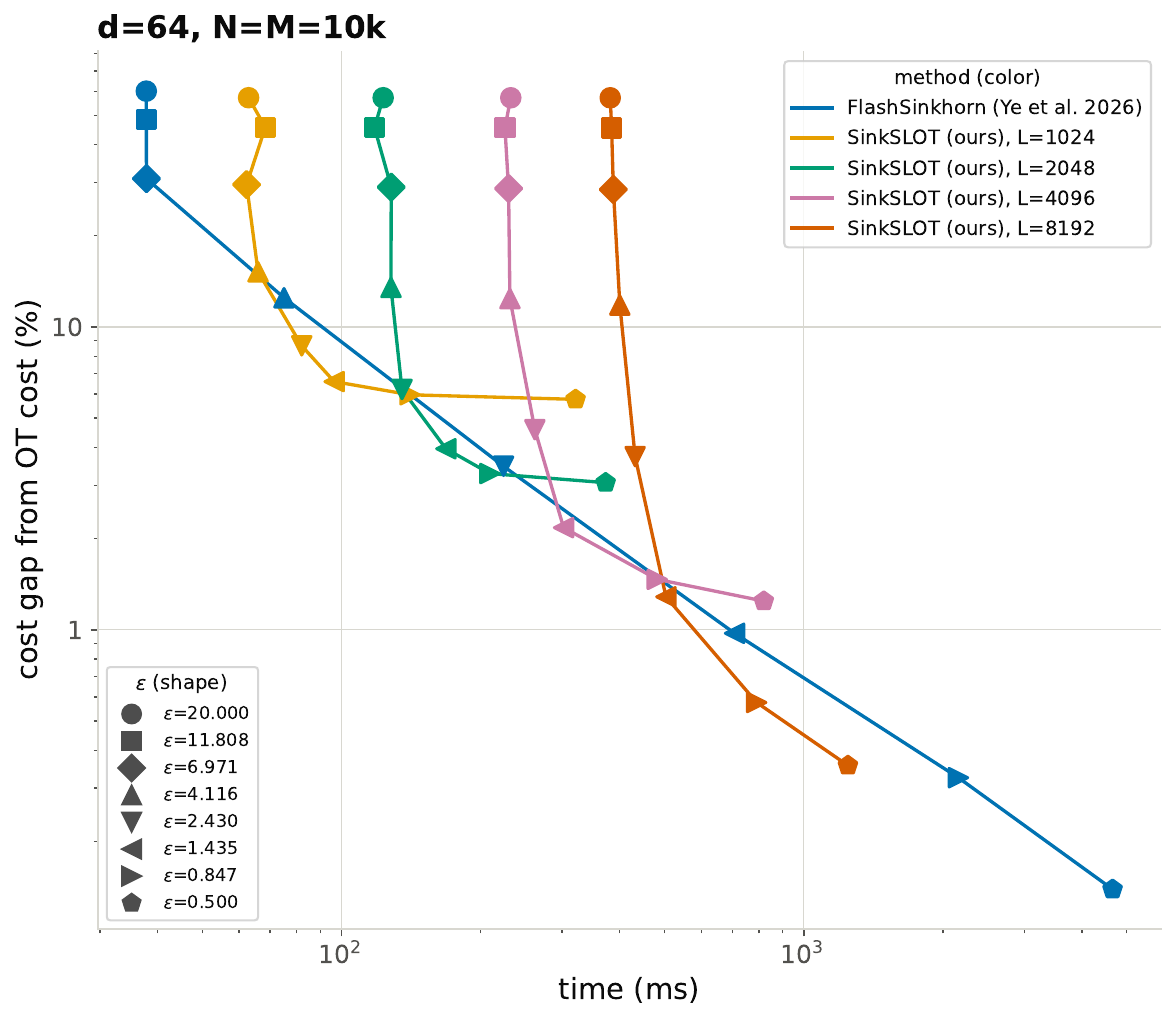}
\caption{Pareto fronts for the accuracy--speed trade-off on the Gaussian $d=64$ dataset. FlashSinkhorn vs SinkSLOT at four slice counts.}
\label{fig:pareto_gauss64}
\end{figure}

\appsubsec{Impact of L}\label{app:pareto}
To understand the accuracy-speed trade-off one faces when varying regularisation strength $\varepsilon$ and slice count $L$, Figure~\ref{fig:pareto} plots the cost gap from exact OT cost against runtime for FlashSinkhorn and SinkSLOT on each of the four low-dimensional datasets. The plot uses the eight values of $\varepsilon$ swept in the benchmark experiment and three slice counts: $128$ (low), $1024$ (medium), $4096$ (high). Since the results needed for this plot are readily available from the benchmark experiment, no additional runs were performed. For the Gaussian $d=64$ dataset, we run additional experiments using larger $\varepsilon$ to show the trade-off at higher cost gaps in Figure~\ref{fig:pareto_gauss64}.

Figure~\ref{fig:pareto} shows that, in low-dimensional settings, SinkSLOT achieves a better accuracy-speed trade-off than FlashSinkhorn. In other words, given a target cost gap, we can find a combination of hyperparameters $(L,\varepsilon)$ for which SinkSLOT is faster. Similarly, given a runtime budget, we can find $(L,\varepsilon)$ for which SinkSLOT achieves a lower cost gap. 

As with many optimisation methods, the question is then how to select $(L,\varepsilon)$ to achieve the optimal trade-off. There is no simple answer, and the appropriate choice depends strongly on the dataset. For \textit{two-rings}, a low slice count of $L=128$ appears sufficient, as its curve lies entirely southwest of the other three curves. Decreasing $\varepsilon$ reduces the cost gap while increasing runtime. For the other three datasets, the overall Pareto front for SinkSLOT comprises points obtained using multiple slice counts. 

The Gaussian dataset with $d=3$ illustrates this behaviour particularly well. At $L=128$, runtime is low, and decreasing $\varepsilon$ initially reduces the cost gap. Beyond a certain point, however, further reduction in $\varepsilon$ increases runtime without improving the quality of the output plan. In contrast to \textit{two-rings}, increasing the slice count to $L=1024$ is more effective. At large $\varepsilon$, the cost gap is comparable to $L=128$, but decreasing $\varepsilon$ has a greater effect on the cost gap at this slice count. The same reasoning applies when increasing the slice count from $L=1024$ to $L=4096$ when the objective is to achieve a cost gap below $1\%$.

The same conclusion holds in the high-dimensional setting shown in Figure~\ref{fig:pareto_gauss64}, even though the advantage over FlashSinkhorn is less pronounced. For a given cost-gap-runtime trade-off achieved by FlashSinkhorn, meticulously selected combinations $(L,\varepsilon)$ can achieve both a lower cost gap and a shorter runtime, but this set is likely to be small. In practice, the gains in accuracy and speed over FlashSinkhorn may not compensate for the time investment on experimentation to identify an appropriate hyperparameter range for a given distribution. This limitation is not unique to SinkSLOT. Rather, it is a broader challenge for slice-based OT methods, as the approximation ability of lifted plans deteriorates when dimension increases~\citep{tanguy2025sliced}.

\appsubsec{Convergence speed}\label{app:convergence_speed}
\input{Tables/convergence}
Table~\ref{tab:accuracy_iters} reports the number of iterations required for convergence for each fastest run reported in Table~\ref{tab:accuracy_speed}. No additional runs were performed to produce this table. Here, one iteration consists of sequentially updating the two dual potentials, corresponding to Steps~11 and~12 of Algorithm~\ref{alg:sinkslot}. 

In all selected runs except one (SinkSLOT on Gaussian $d=3$),  Sinkhorn converged within $10^4$ iterations. SinkSLOT required a number of iterations comparable to that of its dense counterpart, SROT.

\appsubsec{Memory cost}\label{app:memory_cost}
\input{Tables/memory}

\begin{figure*}[t]
\centering
\includegraphics[width=0.89\textwidth]{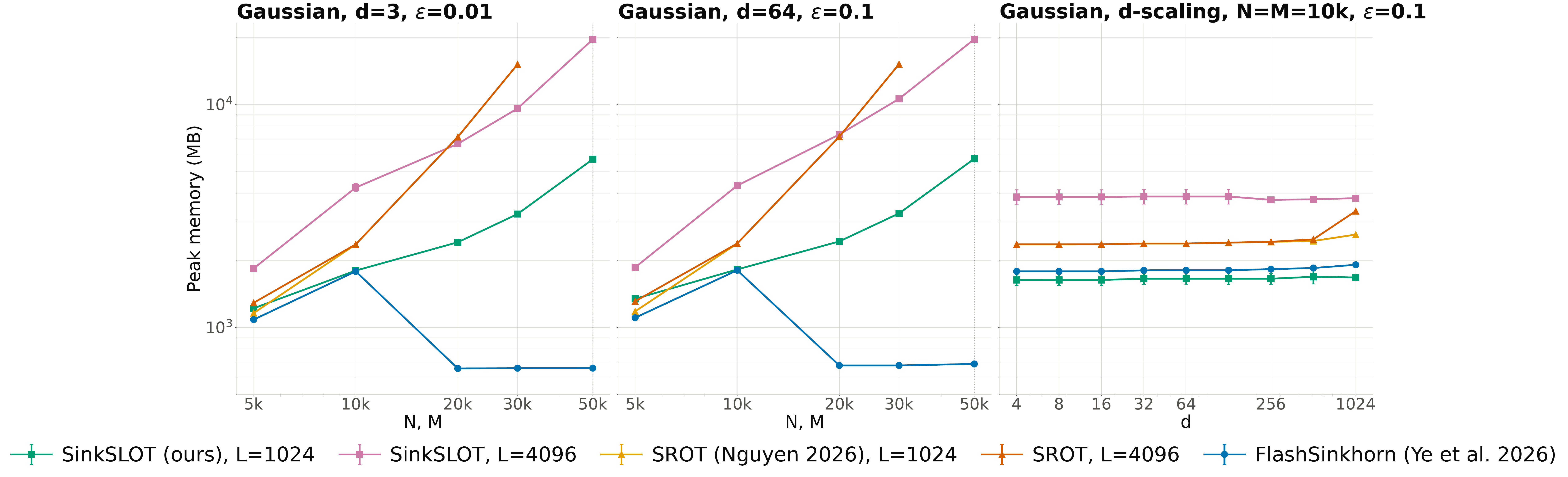}
\caption{Forward-pass peak memory against $(N,M)$ and $d$. Error bars indicate the standard error over five random seeds.}
\label{fig:scalability_memory}
\end{figure*}

Table~\ref{tab:accuracy_speed} and Figure~\ref{fig:scalability} report the shortest runtime achievable at each predefined cost gap threshold. In practice, however, computational efficiency is constrained not only by runtime but also by hardware memory requirements. We therefore investigate the lowest peak memory usage achievable by each method at each cost gap threshold. Table~\ref{tab:accuracy_memory} reports our findings for fixed $N=M=10^4$ on the same datasets, at the three cost gap thresholds of $1\%$, $5\%$ and $10\%$. Figure~\ref{fig:scalability_memory} reports the peak memory usage as $N,M$ increase from $5\times10^3$ to $5\times10^4$ for Gaussian $d=3,64$, as well as while increasing $d$ up to $1024$.

Although the fused Triton kernels described in Appendix~\ref{app:cuda} keep the Sinkhorn iteration itself memory-efficient, the memory reported here often comes from storing the sparse support $\mathcal S$ of the prior coupling $P^{\mathrm{SOT}}$, whose size grows with the slice count $L$. For example, in Table~\ref{tab:accuracy_memory}, on Gaussian $d{=}64$ at the $1\%$ threshold, SinkSLOT needs $L{=}8192$ slices to bring the cost gap under $1\%$. At $N{=}M{=}10^4$, this many slices covers $55.7\%$ of all $N\times M$ pairs. Since the sparse support must store row and column indices in addition to the corresponding values, its memory footprint can exceed that of a dense $N\times M$ matrix. This explains why SinkSLOT's peak memory of $5050$ MB exceeds SROT's flat, fully dense $2380$ MB in this setting. Nevertheless, in the high-sample-size setting ($N{=}5\times10^4$) shown in Figure~\ref{fig:scalability_memory}, SinkSLOT's memory usage remains manageable even with $4096$ slices, whereas SROT runs out of memory at all slice counts because its dense matrix grows quadratically with $N$.

\appsubsec{Stop-gradient experiments}\label{app:stop_gradient}

We verify the two claims underlying Equation~\eqref{eq:full_grad} numerically: that
term~(I) is the gradient of $\mathrm{SLOT}_\varepsilon$ under the stop-gradient
convention, and that term~(II) is negligible. We also quantify a third,
distinct quantity that is easily conflated with term~(II): the sensitivity of
the Sinkhorn \emph{solve} itself.

\paragraph{Protocol}
All measurements are taken along the blob-to-crescent flow of
Appendix~\ref{app:grad_flow}, with $N=M=1000$, $L=100$ projections,
$\varepsilon$ as stated, $600$ inner Sinkhorn iterations and $50$ gradient
steps at $\eta=0.05$. The trajectory is driven by term~(I) throughout, since
that is the method under study. At each step the reference plan
$P^{\mathrm{SOT}}$ is rebuilt from the current $X$, exactly as the method does,
and is then shared by every estimator evaluated at that step. The projection
directions are drawn from a fixed seed, so across a perturbation only the rank
\emph{orders} they induce can change, never the directions themselves.
Term~(I) is computed in two
independent ways: in closed form as
$2\,\mathrm{diag}(\mathbf{a})(X - T_\varepsilon(X))$ with no automatic
differentiation, and by automatic differentiation through the cost with the
prior coupling held constant. The two agree to a relative $10^{-4}$ at every step, the
discrepancy tracking the marginal violation $\max_i|a_i/r_i-1|$ of the
truncated solve. Reported values are accumulated in \texttt{float32} on GPU,
whose atomic reductions are not order-deterministic, so all numbers below should
be read to two or three significant figures; the finite-difference study is
carried out in \texttt{float64}.

\paragraph{Term~(I) is exact, and term~(II) is a jump rather than a derivative}
We compare the analytic directional derivative along random unit
directions~$v$ against
$[\mathrm{SLOT}_\varepsilon(X+hv) - \mathrm{SLOT}_\varepsilon(X-hv)]/2h$,
with $P^{\mathrm{SOT}}$ either frozen at the base point or rebuilt at each
perturbed point so that rank flips are free to occur. With the reference plan
frozen, the finite difference matches
term~(I) to a relative $1.1\times10^{-9}$ at the first step
(Table~\ref{tab:fd_check}); the growth of the reported error along the flow
reflects the derivative itself decaying to $\sim\!7\times10^{-6}$ against a
fixed finite-difference noise floor, not a degradation of the identity.

With the reference plan rebuilt, the divided difference does not converge as
$h\to0$: at $\varepsilon=0.01$ and the first step it takes the values
$4.7\times10^{-3}$, $1.5\times10^{-2}$, $5.8\times10^{-2}$ and
$2.4\times10^{-1}$ for $h=10^{-3},\dots,10^{-6}$, against an analytic value of
$1.76\times10^{-3}$. This $\mathcal{O}(h^{-1})$ growth is the signature of a
jump discontinuity, not of a missing gradient term, and it is consistent with
$P^{\mathrm{SOT}}$ being piecewise constant: at $h=10^{-7}$, where no rank flip
occurs between the two evaluations, the rebuilt difference returns to the
analytic value. Term~(II) is therefore exactly zero wherever it is defined, and
at the rank-flip boundaries $\mathrm{SLOT}_\varepsilon$ is discontinuous rather
than differentiable.

Those discontinuities are small. Rebuilding $P^{\mathrm{SOT}}$ moves roughly
$0.3\%$ of the support entries and changes the value by a relative
$1.6\times10^{-6}$ at the first step, rising to at most $3.2\times10^{-4}$ at
the last, where the value itself has fallen by two orders of magnitude; the jump
shrinks with $h$ ($4.8\times10^{-4}$ at $h=10^{-3}$ to $3.3\times10^{-5}$ at
$h=10^{-6}$, at step~$50$) and is systematically negative, since the rebuilt
plan is optimal at the perturbed point. Stop-gradient on $P^{\mathrm{SOT}}$ is
thus justified in practice, the justification being piecewise-constancy plus
small jumps rather than differentiability with a vanishing derivative.

\begin{table*}[t]
\centering
\begin{tabular}{rrrrr}
\toprule
step & analytic & FD (frozen) & rel.\ err. & $|\Delta F|/F$ \\
\midrule
 0 & $4.431\times10^{-4}$ & $4.431\times10^{-4}$ & $1.1\times10^{-9}$ & $1.6\times10^{-6}$\\
10 & $-2.906\times10^{-4}$ & $-2.906\times10^{-4}$ & $8.7\times10^{-5}$ & $1.3\times10^{-5}$\\
25 & $-6.904\times10^{-5}$ & $-6.903\times10^{-5}$ & $6.6\times10^{-4}$ & $8.6\times10^{-5}$\\
40 & $-7.862\times10^{-6}$ & $-7.853\times10^{-6}$ & $2.3\times10^{-3}$ & $3.2\times10^{-4}$\\
50 & $-6.683\times10^{-6}$ & $-6.696\times10^{-6}$ & $3.8\times10^{-3}$ & $1.8\times10^{-4}$\\
\bottomrule
\end{tabular}
\caption{Finite-difference check of term~(I) along the flow, $\varepsilon=0.01$,
$h=10^{-4}$, six random directions per point, \texttt{float64}. ``rel.\ err.''
compares the analytic directional derivative to the central difference with
$P^{\mathrm{SOT}}$ frozen; $|\Delta F|/F$ is the relative change in the value
caused by rebuilding $P^{\mathrm{SOT}}$ at the perturbed point, i.e.\ the only
trace term~(II) leaves.}
\label{tab:fd_check}
\end{table*}

\paragraph{The solver term is negligible until convergence}
A separate question is what is lost by treating the \emph{solve} as exact.
Differentiating $\langle C, P\rangle$ through all inner Sinkhorn iterations
yields a gradient that differs from term~(I) by a residual. This residual is
not term~(II), since the reference plan is frozen in both arms, but the
sensitivity of the optimal plan, which the envelope identity for
$\langle C,P\rangle + \varepsilon\,\KL(P\|P^{\mathrm{SOT}})$ absorbs:
unrolling that regularised objective instead recovers term~(I) to a cosine
similarity of $0.999992$.

Figure~\ref{fig:gradient_terms} tracks both norms along the flow. The residual
is non-zero everywhere, but its magnitude is nearly constant, varying by
under $13\%$ from step~$10$ onwards, while term~(I) decays by a factor
of~$102$ as the flow converges. Their ratio therefore grows from $0.029$ at the
first step to $1.12$ at the last, the two crossing at step~$47$. The transport
cost dates the crossover: $\langle P, C\rangle$ falls by nearly three orders of
magnitude and is within a factor of two of its final value by step~$27$, where
the two gradients still agree to a cosine similarity of $0.982$. All of the
disagreement accrues afterwards, over steps in which the objective is no longer
moving. The same three-part picture holds across
$\varepsilon\in\{0.003,0.01,0.03,0.1\}$, with the residual's floor, and hence
the crossover step, set by~$\varepsilon$; at $\varepsilon=0.003$ even $600$
inner iterations do not converge the solve, so that column mixes the effect
with truncation error and is marked accordingly. The exact $W_2$ distance,
computed by dense EMD and so independent of $\varepsilon$ and $L$, plateaus at
the same step as $\langle P, C\rangle$ in every case, confirming that the flow
has genuinely arrived rather than that the regularised objective has saturated.

The analytic gradient is within a few percent of the fully-unrolled one
throughout the phase in which the flow transports mass, and departs from it only
once the objective has settled at its $\mathcal{O}(\varepsilon)$ floor, where
the descent direction no longer matters. Unrolling the solve is \emph{least}
defensible where it is usually assumed to be safest, and its
$\mathcal{O}(k)$ stored activations buy nothing.

\begin{figure*}[t]
\centering
\includegraphics[width=\textwidth]{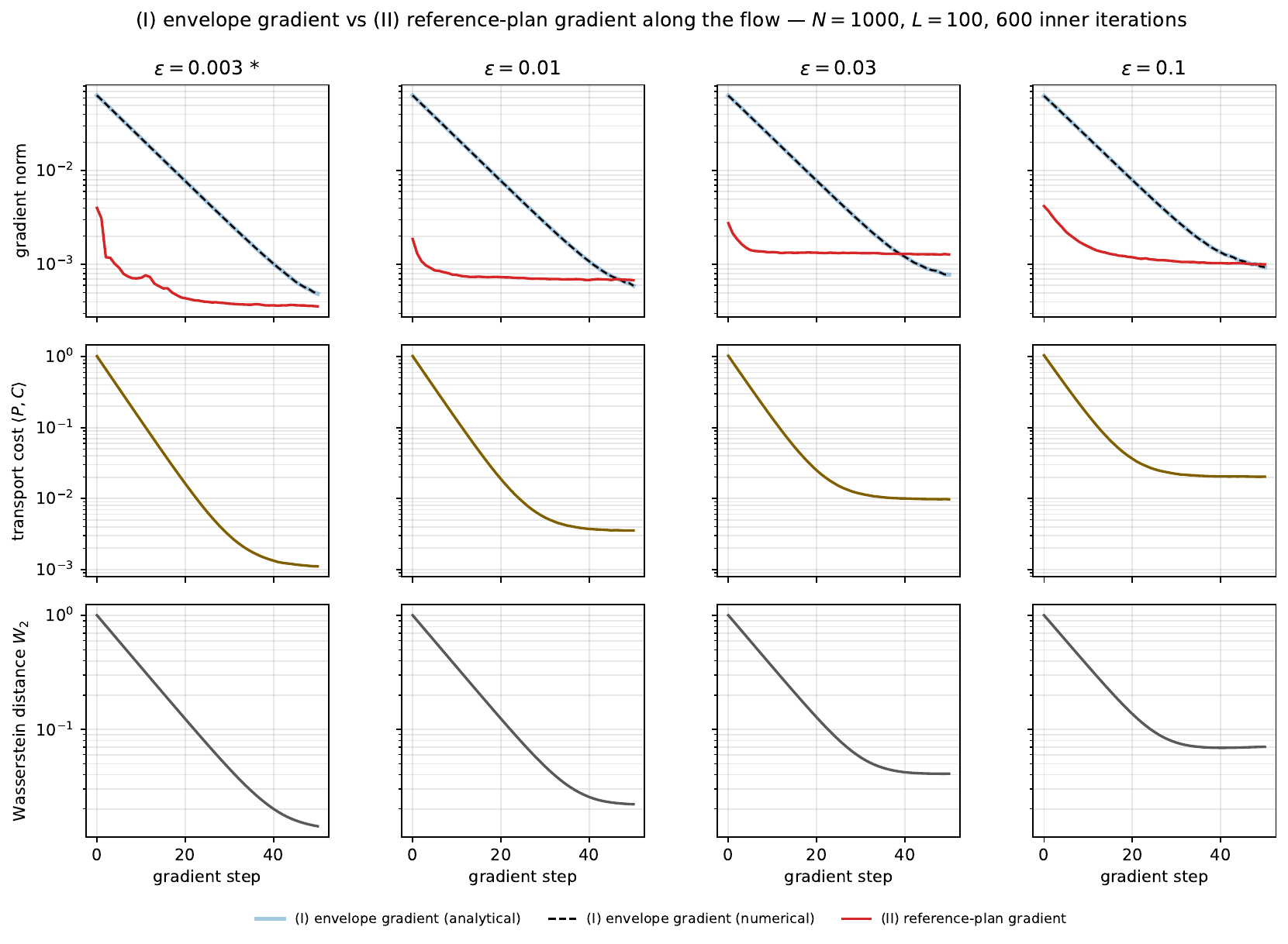}
\caption{Envelope gradient (I) and the solver residual along the
blob-to-crescent flow, one column per $\varepsilon$, $N=M=1000$, $L=100$, $600$
inner iterations. \textbf{Top:} term~(I) computed analytically (closed form) and
numerically (automatic differentiation with the prior coupling held
constant), indistinguishable at this scale, together
with the residual. \textbf{Middle:} the plan's transport cost
$\langle P, C\rangle$. \textbf{Bottom:} the exact $2$-Wasserstein distance $W_2$
to the target. The residual is roughly constant while term~(I) decays by two
orders of magnitude, so it becomes comparable only after the transport cost has
plateaued at its $\varepsilon$-dependent floor. $^{*}$ marks a column whose
solve is not converged at $600$ iterations.}
\label{fig:gradient_terms}
\end{figure*}

\end{document}

%% file: Tables/speedup.tex
\begin{table*}[t]
\centering
\caption{SinkSLOT speedup over competing methods at each transport cost gap threshold. ``---'' indicates that the method did not reach the threshold at convergence (or after 10k Sinkhorn iterations). $N=10^4$. $d=2$ unless otherwise specified. H100-80GB.}
\label{tab:accuracy_speed}
\setlength{\tabcolsep}{2pt}
\small
\resizebox{\textwidth}{!}{%
\begin{tabular}{l ccc ccc ccc ccc ccc}
\toprule
 & \multicolumn{3}{c}{\textbf{Half-moons}}
 & \multicolumn{3}{c}{\textbf{8-Gaussians}}
 & \multicolumn{3}{c}{\textbf{Two-rings}}
 & \multicolumn{3}{c}{\textbf{Gauss $d{=}3$}}
 & \multicolumn{3}{c}{\textbf{Gauss $d{=}64$}} \\
\cmidrule(lr){2-4} \cmidrule(lr){5-7} \cmidrule(lr){8-10} \cmidrule(lr){11-13} \cmidrule(lr){14-16}
Cost gap threshold:
 & ${\leq}1\%$ & ${\leq}5\%$ & ${\leq}10\%$
 & ${\leq}1\%$ & ${\leq}5\%$ & ${\leq}10\%$
 & ${\leq}1\%$ & ${\leq}5\%$ & ${\leq}10\%$
 & ${\leq}1\%$ & ${\leq}5\%$ & ${\leq}10\%$
 & ${\leq}1\%$ & ${\leq}5\%$ & ${\leq}10\%$ \\
\midrule
\multicolumn{16}{l}{\textbf{SinkSLOT speedup ($\mathbf{\times}$):}} \\
FlashSinkhorn~\cite{ye2026flashsinkhorn}  & 13.1 & 8.4 & 8.4 & 17.5 & 19.4 & 16.7 & 36.4 & 14.7 & 8.2 & 4.3 & 9.7 & 9.6 & 2.8 & 6.7 & 11.4 \\
\rowcolor{LightGray}
SROT~\cite{nguyen2026sliced}       & 124.9 & 118.6 & 118.6 & 103.0 & 160.2 & 119.5 & 122.8 & 131.3 & 131.3 & 29.5 & 62.1& 89.7 & 15.2 & 27.1 & 35.1 \\
Spar-Sink~\cite{li2023importance}   & 4.8 & 2.5 & 2.5 & --- & --- & 37.4 & 45.5 & 5.5 & 3.3 & --- & --- & --- & --- & --- & --- \\
\midrule
\rowcolor{lime!8}
\multicolumn{16}{l}{\textbf{Runtime (ms):}} \\
\rowcolor{lime!8}
SinkSLOT (ours) & 16.5 & 7.0 & 7.0 & 145.2 & 18.5 & 11.0 & 7.1 & 5.1 & 5.1 & 1577.9 & 351.0 & 165.7 & 358.9 & 147.9 & 87.2 \\
\bottomrule
\end{tabular}}
\end{table*}

%% file: Tables/convergence.tex
\begin{table*}[t]
\centering
\caption{Iterations run by the SAME configuration selected in the speedup table~\ref{tab:accuracy_speed} (the fastest wall-clock run reaching each transport cost-gap threshold). ``---'' means no feasible run reached the threshold. $N=10^4$. $d=2$ unless otherwise specified. H100-80GB.}
\label{tab:accuracy_iters}
\setlength{\tabcolsep}{3pt}
\small
\resizebox{\textwidth}{!}{%
\begin{tabular}{l ccc ccc ccc ccc ccc}
\toprule
 & \multicolumn{3}{c}{Half-moons}
 & \multicolumn{3}{c}{8-Gaussians}
 & \multicolumn{3}{c}{Two-rings}
 & \multicolumn{3}{c}{Gauss $d{=}3$}
 & \multicolumn{3}{c}{Gauss $d{=}64$} \\
\cmidrule(lr){2-4} \cmidrule(lr){5-7} \cmidrule(lr){8-10} \cmidrule(lr){11-13} \cmidrule(lr){14-16}
Method
 & ${\leq}1\%$ & ${\leq}5\%$ & ${\leq}10\%$
 & ${\leq}1\%$ & ${\leq}5\%$ & ${\leq}10\%$
 & ${\leq}1\%$ & ${\leq}5\%$ & ${\leq}10\%$
 & ${\leq}1\%$ & ${\leq}5\%$ & ${\leq}10\%$
 & ${\leq}1\%$ & ${\leq}5\%$ & ${\leq}10\%$ \\
\midrule
\multicolumn{16}{l}{\textbf{Iterations of the fastest-time (speedup-table) run:}} \\
SinkSLOT (ours) & 420 & 140 & 140 & 3180 & 460 & 250 & 150 & 90 & 90 & 10000 & 3240 & 1430 & 430 & 520 & 990 \\
\rowcolor{LightGray}
FlashSink~\cite{ye2026flashsinkhorn} & 260 & 70 & 70 & 3060 & 430 & 220 & 310 & 90 & 50 & 8220 & 4090 & 1920 & 430 & 430 & 430 \\
SROT~\cite{nguyen2026sliced} & 450 & 140 & 140 & 3210 & 470 & 260 & 150 & 100 & 100 & 7320 & 3290 & 1550 & 1230 & 870 & 700 \\
\rowcolor{LightGray}
Spar-Sink~\cite{li2023importance} & 260 & 70 & 70 & --- & --- & 1010 & 654 & 96 & 50 & --- & --- & --- & --- & --- & --- \\
\bottomrule
\end{tabular}}
\end{table*}

%% file: Tables/memory.tex
\begin{table*}[t]
\centering
\caption{Lowest peak GPU memory (MB) among feasible runs reaching each transport gap threshold. ``---'' means no feasible run reached the threshold. $N=10^4$. $d=2$ unless otherwise specified. H100-80GB.}
\label{tab:accuracy_memory}
\setlength{\tabcolsep}{3pt}
\small
\resizebox{\textwidth}{!}{%
\begin{tabular}{l ccc ccc ccc ccc ccc}
\toprule
 & \multicolumn{3}{c}{Half-moons}
 & \multicolumn{3}{c}{8-Gaussians}
 & \multicolumn{3}{c}{Two-rings}
 & \multicolumn{3}{c}{Gauss $d{=}3$}
 & \multicolumn{3}{c}{Gauss $d{=}64$} \\
\cmidrule(lr){2-4} \cmidrule(lr){5-7} \cmidrule(lr){8-10} \cmidrule(lr){11-13} \cmidrule(lr){14-16}
Method
 & ${\leq}1\%$ & ${\leq}5\%$ & ${\leq}10\%$
 & ${\leq}1\%$ & ${\leq}5\%$ & ${\leq}10\%$
 & ${\leq}1\%$ & ${\leq}5\%$ & ${\leq}10\%$
 & ${\leq}1\%$ & ${\leq}5\%$ & ${\leq}10\%$
 & ${\leq}1\%$ & ${\leq}5\%$ & ${\leq}10\%$ \\
\midrule
\multicolumn{16}{l}{\textbf{Peak memory (MB):}} \\
SinkSLOT (ours) & 820 & 820 & 820 & 896 & 820 & 820 & 820 & 820 & 820 & 2580 & 1799 & 1281 & 5050 & 3007 & 1302 \\
\rowcolor{LightGray}
FlashSink~\cite{ye2026flashsinkhorn} & 1785 & 1785 & 1785 & 1785 & 1785 & 1785 & 1785 & 1785 & 1785 & 1785 & 1785 & 1785 & 1806 & 1806 & 1806 \\
SROT~\cite{nguyen2026sliced} & 2359 & 2359 & 2359 & 2359 & 2359 & 2359 & 2359 & 2359 & 2359 & 2359 & 2359 & 2359 & 2380 & 2380 & 2380 \\
\rowcolor{LightGray}
SparSink~\cite{li2023importance} & 1258 & 1258 & 1258 & --- & --- & 1258 & 1258 & 1258 & 1258 & --- & --- & --- & --- & --- & --- \\
\bottomrule
\end{tabular}}
\end{table*}